\pgfplotsset{compat=newest}
\pgfplotsset{plot coordinates/math parser=false}
\newtheorem{theorem}{Theorem}[section]
\newtheorem{lemma}[theorem]{Lemma}
\newtheorem{corollary}[theorem]{Corollary}
\newtheorem{claim}[theorem]{Claim}
\newtheorem{fact}[theorem]{Fact}
\newtheorem*{theorem*}{Theorem}
\newtheorem*{lemma*}{Lemma}
\newtheorem*{corollary*}{Corollary}
\newtheorem*{definition*}{Definition}
\newtheorem*{claim*}{Claim}
\newtheorem*{fact*}{Fact}
\newcommand{\ignore}[1]{}
\newcommand{\E}{\mathbf{E}}
\newcommand{\D}{\mathcal{D}}
\newcommand{\R}{\mathbb{R}}
\newcommand{\EE}[2][]{\E_{#1}\left[ #2 \right]}
\newcommand{\abs}[1]{\lvert #1 \rvert}
\newcommand{\norm}[1]{\lVert #1 \rVert}
\newcommand{\til}[1]{\tilde{#1}}
\newcommand{\tr}{^{{\scriptscriptstyle \top}}}
\renewcommand{\c}{\mathbf{c}}
\newcommand{\e}{\mathbf{e}}
\newcommand{\g}{\mathbf{g}}
\newcommand{\p}{\mathbf{p}}
\renewcommand{\v}{\mathbf{v}}
\newcommand{\w}{\mathbf{w}}
\newcommand{\x}{\mathbf{x}}
\newcommand{\y}{y}
\newcommand{\z}{\mathbf{z}}
\newcommand{\s}{\theta}
\renewcommand{\t}{\phi}
\newcommand{\xtil}{\til{\x}}
\newcommand{\ytil}{\til{\y}}
\newcommand{\gtil}{\til{\g}}
\newcommand{\stil}{\til{\s}}
\newcommand{\ttil}{\til{\t}}
\newcommand{\gbar}{\bar{\g}}
\newcommand{\wbar}{\bar{\w}}
\newcommand{\that}{\hat{\t}}
\newcommand{\yhat}{\hat{\y}}
\newcommand{\erf}{\mathrm{erf}}
\newcommand{\sgn}{\mathrm{sign}}
\newcommand{\var}{\mathbf{var}}
\newcommand{\clip}{\mathrm{clip}}
\renewcommand{\O}{O}
\newcommand{\loss}{\ell}
\newcommand{\losst}{\loss_t}
\newcommand{\eloss}{L_{\D}}
\newcommand{\st}{\star}
\newcommand{\eps}{\varepsilon}
\newcommand{\del}{\delta}
\newcommand{\summ}{\sum_{t=1}^m}
\newcommand{\sumj}{\sum_{j=1}^d}
\newcommand{\sumt}{\sum_{t=1}^{T}}
\newcommand{\prodt}{\prod_{t=1}^{T}}
\newcommand{\mini}{\min_{i\in[d]}}
\renewcommand{\le}{\leqslant}
\renewcommand{\ge}{\geqslant}
\renewcommand{\leq}{\le}
\renewcommand{\geq}{\ge}
\icmltitlerunning{Linear Regression with Limited Observation}
\begin{document} 

\twocolumn[
\icmltitle{Linear Regression with Limited Observation}

\icmlauthor{Elad Hazan}{ehazan@ie.technion.ac.il}
\icmlauthor{Tomer Koren}{tomerk@cs.technion.ac.il}
\icmladdress{Technion --- Israel Institute of Technology,
             Technion City 32000,
             Haifa, Israel}

\icmlkeywords{Linear regression, Online learning, Stochastic optimization}


\vskip 0.2in
]

\begin{abstract}
We consider the most common variants of linear regression, including Ridge, Lasso and Support-vector regression, in a setting where the learner is allowed to observe only a fixed number of attributes of each example at training time. We present simple and efficient algorithms for these problems: for Lasso and Ridge regression they need the same total number of attributes (up to constants) as do full-information algorithms, for reaching a certain accuracy. For Support-vector regression, we require exponentially less attributes compared to the state of the art. By that, we resolve an open problem recently posed by Cesa-Bianchi et al. (\citeyear{cesa2010efficient}).


Experiments show the theoretical bounds to be justified by superior performance compared to the state of the art. 
\end{abstract}

\section{Introduction}
\label{sec:intro}

In regression analysis the statistician attempts to learn from examples the underlying variables affecting a given phenomenon. For example, in medical diagnosis a certain combination of conditions reflects whether a patient is afflicted with a certain disease.  

In certain common regression cases various limitations are placed  on the information available from the examples. In the medical example, not all parameters of a certain patient can be measured due to cost, time and patient reluctance. 

In this paper we study the problem of regression in which only a small subset of the attributes per example can be observed. 
In this setting, we have access to {\em all} attributes and we are required to {\em choose} which of them to observe.
Recently, Cesa-Bianchi et al.~(\citeyear{cesa2010efficient}) studied this problem and asked the following interesting question: {\it can we efficiently learn the optimal regressor in the attribute efficient setting with the same total number of attributes as in the unrestricted regression setting?}   
In other words, the question amounts to whether the information limitation hinders our ability to learn efficiently at all. Ideally, one would hope that instead of observing all attributes of every example, one could compensate for fewer attributes by analyzing more examples, but retain the same overall sample and  computational  complexity. 

Indeed, we answer this question on the affirmative for the main variants of regression: Ridge and Lasso. For support-vector regression we make significant advancement, reducing the parameter dependence by an exponential factor.
Our results are summarized in the table below \footnote{The previous bounds are due to \cite{cesa2010efficient}. For SVR, the bound is obtained by additionally incorporating the methods of \cite{cesa2011online}. }, which gives bounds for the number of examples needed to attain an error of $\eps$, such that at most $k$ attributes  \footnote{For SVR, the number of attributes viewed per example is a random variable whose expectation is $k$.} are viewable per example. We denote by~$d$ the dimension of the attribute space.

\begin{table}[htdp]
\begin{center}
\begin{tabular}{|c|c|c|}
\hline
\mbox{Regression}  & \mbox{ New bound } & \mbox{ Prev. bound} \\
\hline
\hline
\mbox{Ridge}  & $\O\left( {\frac{d}{k \eps^2}} \right)$  & $\O\left(  \frac{d^2 \log \frac{d}{\eps}}{{k \eps^2}} \right)$\\
\hline
\mbox{Lasso}  & $\O\left(  {\frac{d\log d}{k \eps^2}} \right)$  & $\O\left(  \frac{d^2 \log \frac{d}{\eps}}{{k \eps^2}} \right)$\\
\hline
\mbox{SVR}  & $  \O\left(\frac{d}{k} \right) \cdot e^{\O\left( \log^2{\frac{1}{\eps}} \right)} $  & $\O\left(  e^{\frac{d^2}{k \eps^2}} \right)$\\
\hline
\end{tabular}
\end{center}
\caption{Our sample complexity bounds.}
\label{default}
\end{table}%

\vspace{-2ex}

Our bounds imply that for reaching a certain accuracy, our algorithms need \emph{the same number of attributes} as their full information counterparts. 
In particular, when $k=\Omega(d)$ our bounds coincide with those of full information regression, up to constants (cf.~\citealt{kakade2008complexity}).

We complement these upper bounds and prove that $\Omega(\frac{d}{\eps^2})$ attributes are in fact necessary to learn an $\eps$-accurate Ridge regressor. For Lasso regression,  Cesa-Bianchi et al. (\citeyear{cesa2010efficient}) proved that $\Omega(\frac{d}{\eps})$ attributes are necessary, and asked what is the correct dependence on the problem dimension. Our bounds imply that the number of attributes necessary for regression learning grows linearly with the problem dimensions. 

The algorithms themselves are very simple to implement, and run in linear time. As we show in later sections, these theoretical improvements are clearly visible in experiments on standard datasets.

\subsection{Related work} 
\label{sub:previous_work}

The setting of learning with limited attribute observation (LAO) was first put forth in \cite{ben1998learning}, who coined the term ``learning with restricted focus of attention''.
Cesa-Biachi et al.~(\citeyear{cesa2010efficient}) were the first to discuss linear prediction in the LAO setting, and gave an efficient algorithm (as well as lower bounds) for linear regression, which is the primary focus of this paper. 



\ignore{ 
\subsection{Organization} 

The rest of the paper is organized as follows.  

In section \ref{sec:setting} we describe the learning setting and formalize the problem.
In Section \ref{sec:regalgs} we present and analyze our algorithms for Ridge and Lasso least-squares regression, in which the loss function is the squared loss. Later, in section~\ref{sec:extension}, we describe generalizations of our algorithms to other learning scenarios, including an algorithm for SVR.

}

\section{Setting and Result Statement} 
\label{sec:prelim}

\subsection{Linear regression} 
\label{sub:linear_regression}

In the linear regression problem, each instance is a pair $(\x,\,\y)$ of an
attributes vector $\x\in\R^d$
and a target variable $\y \in \R$. We
assume the standard framework of statistical learning \cite{haussler1992decision}, in which the pairs
$(\x,\,\y)$ follow a joint probability distribution $\D$ over
$\R^d \times \R$. The goal of the learner is to find a vector $\w$ for
which the linear rule $\hat{\y}\gets\w\tr\x$ provides a good prediction of the
target $\y$. To measure the performance of the prediction, we use a convex loss
function $\loss(\hat{y},\y):\R^2 \rightarrow \R$.
The most common choice is the squared loss $\loss(\yhat,\y)=\tfrac{1}{2}(\yhat-\y)^2$, which stands for the popular least-squares regression.
Hence, in terms of the distribution $\D$, the learner would
like to find a regressor $\w \in \R^d$ with low expected loss, defined as
\begin{equation} \label{eq:eloss}
	\eloss(\w)
	= \E_{(\x,\,\y)\sim\D}[\loss(\w\tr\x,\,\y)] \;.
\end{equation}

The standard paradigm for learning such regressor is seeking a vector $\w \in \R^d$
that minimizes a trade-off between the expected loss and an additional
regularization term, which is usually a norm of $\w$. An equivalent form of
this optimization problem is obtained by replacing the regularization term
with a proper constraint, giving rise to the problem
\begin{equation} \label{eq:eproblem}
	\min_{\w \in \R^d} \;
	\eloss(\w)
	\quad\mbox{s.t.}\quad
	\norm{\w}_p \le B \;,
\end{equation}
where $B>0$ is a regularization parameter and $p \ge 1$. The main variants of regression differ on the type of $\ell_p$ norm constraint as well as the loss functions in the above definition:

\vspace{-2ex}
\begin{itemize}
\item
{\bf Ridge regression:}  $p=2$ and squared loss, 
\[
	\loss(\yhat,\y) = \tfrac{1}{2}(\yhat-\y)^2 \,.
\]
\item
{\bf Lasso regression:}  $p=1$ and squared loss.
\item
{\bf Support-vector regression:}  $p=2$ and the $\del$-insensitive absolute loss  \cite{vapnik1995nature},
\[
	\loss(\yhat,\y) = \abs{\yhat-\y}_\del := \max\{ 0, \abs{\yhat-\y}-\del \} \,.
\]
\end{itemize}
\vspace{-1ex}

Since the distribution $\D$ is unknown, we learn by relying on a training set
$S=\{(\x_t,\,\y_t)\}_{t=1}^{m}$ of examples, that are assumed to be sampled
independently from~$\D$. We use the notation $\losst(\w) := \loss(\w \tr \x_t,\y_t)$ to refer to the loss function induced by the instance $(\x_t,\y_t)$.


We distinguish between two learning scenarios. In the {\bf full information}
setup, the learner has unrestricted access to the entire data set. In the {\bf limited attribute observation (LAO)} setting, for any given example pair $(\x,y)$, the learner can observe $y$, but only $k$ attributes of $\x$ (where $k \ge 1$ is a parameter of the problem). The learner can \emph{actively choose} which attributes to observe.

\subsection{Limitations on LAO regression} 

Cesa-Biachi et al.~(\citeyear{cesa2010efficient}) proved the following sample complexity lower bound on any LAO Lasso regression algorithm.

\begin{theorem} \label{thm:lb-lasso}
Let $0 < \eps < \tfrac{1}{16}$, $k \ge 1$ and $d > 4k$.
For any regression algorithm accessing at most $k$ attributes per training example, there exist a distribution $\D$ over $\{\x : \norm{\x}_\infty \le 1\} \times \{\pm 1\}$ and a regressor $\w_\st$ with $\norm{\w_\st}_1 \le 1$ such that the algorithm must see (in expectation) at least $\Omega(\tfrac{d}{k\eps})$ examples in order to learn a linear regressor~$\w$ with $\eloss(\w) - \eloss(\w_\st) < \eps$.
\end{theorem}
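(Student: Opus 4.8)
This is a lower bound, so I would construct a hard family of distributions and argue that any algorithm observing $k$ attributes per example cannot distinguish the relevant hypothesis from a null one without seeing $\Omega(\frac{d}{k\eps})$ examples. The most natural route is an information-theoretic argument combined with a reduction to a simple hypothesis-testing problem. The constraint $\norm{\w_\st}_1 \le 1$ and the requirement that $\x$ lives in the $\ell_\infty$ ball point to a construction where the ``signal'' is concentrated in a single coordinate chosen from among the $d$ available ones.

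\textbf{Construction.} First I would define the adversary's distribution as follows. Fix a hidden coordinate $j^\star \in [d]$. Let $\x$ have i.i.d.\ Rademacher ($\pm 1$) entries, and let the label be $y = \x_{j^\star}$ with some small bias, or more precisely let $y \in \{\pm 1\}$ correlate with $\x_{j^\star}$ at a level controlled by a parameter $\gamma \approx \eps$, while being independent of all other coordinates. The optimal regressor is then $\w_\st = \gamma\, \e_{j^\star}$ (a scaled basis vector, which trivially satisfies $\norm{\w_\st}_1 \le 1$ for small $\gamma$), and a learner that fails to identify $j^\star$ — or fails to estimate the sign/magnitude of the correlation — will incur excess loss on the order of $\eps$. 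The key point is that information about $j^\star$ flows only through the correlation between $y$ and the single coordinate $\x_{j^\star}$, and an algorithm that does not query coordinate $j^\star$ on a given example learns nothing about it from that example.

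\textbf{Information argument.} The crux is to bound how much information about $j^\star$ a $k$-attribute query reveals per example. On each example the learner observes $y$ together with $k$ chosen coordinates of $\x$; the mutual information between $j^\star$ and a single such observation is small — of order $k\gamma^2/d$ after accounting for the probability $\approx k/d$ that the informative coordinate $j^\star$ is among the queried ones, and the $O(\gamma^2)$ information a single correlated Rademacher observation carries. Summing over $m$ examples and applying Fano's inequality (or Pinsker together with a Le~Cam two-point / multi-point testing bound), identifying $j^\star$ with constant probability forces $m \cdot \frac{k\gamma^2}{d} = \Omega(\log d)$, hence $m = \Omega(\frac{d}{k\gamma^2}\log d)$; calibrating $\gamma$ so that the excess risk threshold is exactly $\eps$ (which sets $\gamma \approx \sqrt{\eps}$ or $\gamma \approx \eps$ depending on how the squared loss converts correlation error into excess risk) yields the claimed $\Omega(\frac{d}{k\eps})$.

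\textbf{Main obstacle.} The delicate step is the calibration between the correlation parameter $\gamma$ and the target accuracy $\eps$ under the squared loss, together with handling the learner's \emph{adaptive} and possibly randomized choice of which $k$ attributes to observe. The per-example information bound must hold even when the query set depends on everything seen so far; I would address this by conditioning and using the chain rule for mutual information, showing that adaptivity cannot increase the expected per-example information about the uniformly hidden $j^\star$ beyond the $O(k\gamma^2/d)$ budget, since at each step the learner's knowledge of $j^\star$ is still essentially uniform. Getting the constants to land in the regime $0 < \eps < \tfrac{1}{16}$ and $d > 4k$ stated in the theorem will require care, but the structural heart of the proof is the single-coordinate planted-correlation construction plus a Fano-type information bottleneck.
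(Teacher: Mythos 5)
This theorem is not proved in the paper at all: it is quoted verbatim from Cesa-Bianchi et al.\ (2010), and the only lower bound the paper proves itself is the Ridge bound (Theorem~\ref{thm:lb-ridge}), which uses a different, combinatorial construction with $\Theta(1/\eps^2)$ planted coordinates. Your planted-single-coordinate construction plus an information/testing bottleneck is essentially the argument in the cited source, so your plan is the right one. Two soft spots are worth pinning down. First, the calibration you leave open must resolve to $\gamma=\Theta(\sqrt{\eps})$: with $y=\pm1$ and correlation $\rho=\E[y\,\x[j^\st]]$, the best predictor supported on $j^\st$ improves on the zero predictor by exactly $\rho^2/2$ in squared loss, so an $\eps$ excess-risk gap forces $\rho=\Theta(\sqrt{\eps})$, and detecting a bias of order $\sqrt{\eps}$ costs $\Omega(1/\eps)$ observations of the relevant coordinate, hence $\Omega(d/(k\eps))$ examples; if you instead set $\gamma\approx\eps$ you would prove only $\Omega(d/(k\eps^2))$ for a different (too-small) risk gap and the statement would not match. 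Second, your claim that under adaptive querying ``the learner's knowledge of $j^\st$ is still essentially uniform'' is not literally true, since the posterior drifts; the clean fix is the bandit-style argument: compare against a null measure with no biased coordinate, note that under the null the total number of queries is $km$, so some coordinate $j$ is queried at most $km/d$ times in expectation, and the KL divergence between the null transcript and the $j^\st=j$ transcript is $O(\rho^2 km/d)$, which must be $\Omega(1)$ for the learner to succeed on that instance. With those two repairs your outline is a correct proof of the stated bound (Fano even gives an extra $\log d$ factor you may discard).
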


We complement this lower bound, by providing a stronger lower bound on the sample complexity of any Ridge regression algorithm, using information-theoretic arguments.

\begin{theorem} \label{thm:lb-ridge}
Let $\eps = \Omega(1/\sqrt{d})$.
For any regression algorithm accessing at most $k$ attributes per training example, there exist a distribution $\D$ over $\{\x : \norm{\x}_2 \le 1\} \times \{\pm 1\}$ and a regressor $\w_\st$ with $\norm{\w_\st}_2 \le 1$ such that the algorithm must see (in expectation) at least $\Omega(\tfrac{d}{k\eps^2})$ examples in order to learn a linear regressor~$\w$, $\norm{\w}_2 \le 1$ with $\eloss(\w) - \eloss(\w_\st) \le \eps$.
\end{theorem}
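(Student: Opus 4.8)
The plan is to construct a hard family of distributions indexed by a hidden sign vector $b\in\{\pm1\}^d$ and to reduce learning an $\eps$-accurate regressor to recovering a constant fraction of the coordinates of $b$, which under the LAO constraint is information-theoretically expensive. Concretely, I would draw the label $\y$ uniformly from $\{\pm1\}$ and, conditionally on $\y$ and $b$, draw the attributes independently as $x_i=\tfrac{1}{\sqrt d}z_i$ with $z_i\in\{\pm1\}$ and $\E[z_i\mid \y]=\gamma\,b_i\,\y$; that is, $z_i=b_i\y$ with probability $\tfrac{1+\gamma}{2}$. This guarantees $\norm{\x}_2=1$ deterministically, and a short computation shows the population risk depends on $\w$ only through $\norm{\w}_2$ and $\langle\w,b\rangle$, with minimizer $\w_\st=\tfrac{1}{\sqrt d}b$ (so $\norm{\w_\st}_2=1$). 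The free parameter is the per-coordinate bias $\gamma$, which I would set to $\gamma\asymp\eps$; the hypothesis $\eps=\Omega(1/\sqrt d)$ keeps this choice in the regime where the construction is valid and the resulting bound is tight.

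Next I would quantify the excess risk. Writing any competitor as aligned to a sign pattern $b'$ at Hamming distance $\eta d$ from $b$ (the worst case is realized by such sign-scaled vectors, since at fixed norm the risk is monotone in $\langle\w,b\rangle$), the computation gives $\eloss(\w)-\eloss(\w_\st)=2\gamma\eta\big(1-\gamma(1-\eta)\big)=\Theta(\gamma\eta)$ for $\gamma\le\tfrac12$. Hence with $\gamma\asymp\eps$, attaining excess risk at most $\eps$ forces the average sign-error rate $\eta$ below a fixed constant: a successful learner must correctly recover a constant fraction of the $d$ hidden signs.

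It then remains to lower bound the number of examples needed to recover a constant fraction of $b$ in the LAO model. The key observation is that the only information an algorithm obtains about the bit $b_i$ comes from the examples on which it chooses to observe coordinate $i$, and since the label is always observed, inspecting coordinate $i$ reveals $z_i\y$, a bit equal to $b_i$ with probability $\tfrac{1+\gamma}{2}$. After $n_i$ such observations the total variation between the two hypotheses for $b_i$ is $\O(\sqrt{n_i}\,\gamma)$ by Pinsker. Applying Assouad's lemma (or, equivalently, a Fano argument over a linear-distance code of sign vectors) together with the a-priori independence of the coordinates, the expected average sign-error is at least $\tfrac12-\tfrac{\gamma}{2d}\sum_i\sqrt{\E[n_i]}\ge\tfrac12-\tfrac{\gamma}{2}\sqrt{N/d}$ by Cauchy--Schwarz, where $N=\sum_i\E[n_i]\le mk$ is the total attribute budget. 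For this to drop below the constant from the previous step we need $N=\Omega(d/\gamma^2)=\Omega(d/\eps^2)$, that is $m=\Omega(d/(k\eps^2))$.

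The step I expect to be most delicate is making the information bound robust to \emph{adaptivity}: the learner may choose which $k$ coordinates to inspect on each example as a randomized function of everything seen so far, so the counts $n_i$ are random and correlated with the data. The clean way around this is to bound the mutual information $I(b;\text{transcript})$ by the chain rule, charging each individual attribute observation at most $\O(\gamma^2)$ bits to the single coordinate it reveals and using the independence of the $b_i$ to get $I(b;\text{transcript})\le \O(\gamma^2)\,\E[\sum_i n_i]\le \O(\gamma^2)\,mk$; Fano then yields the same threshold without ever conditioning on a fixed observation pattern. A secondary technical point is to justify rigorously that unaligned regressors cannot beat sign-scaled ones, which follows from the risk being a function of $\norm{\w}_2$ and $\langle\w,b\rangle$ alone together with the norm constraint $\norm{\w}_2\le 1$.
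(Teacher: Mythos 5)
Your proposal is correct in its essentials but proves the theorem by a genuinely different route than the paper. The paper plants a \emph{sparse} signal: it hides a random subset $T$ of $1/\eps^2$ coordinates, makes each example a signed standard basis vector $r_i\e_i$ with constant label, shows (Lemmas \ref{lem:shalom1}--\ref{lem:shalom2}) that any $O(\eps)$-approximate regressor must place $\Omega(\eps)$-weight on $\Omega(1/\eps^2)$ of the planted coordinates, and then invokes a combinatorial ``find the planted nonzero entry of an array'' lower bound (Fact \ref{fact:list} and Corollary \ref{thm:meta}) to charge $\Omega(d)$ attribute reads per recovered coordinate, for $\Omega(d/\eps^2)$ reads in total. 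You instead plant a \emph{dense} signal: every coordinate carries a $\gamma\asymp\eps$-biased copy of a hidden sign $b_i$ times the label, small excess risk forces $\langle\w,b\rangle$ close to $\sqrt{d}$ and hence (via $\norm{\w}_2\le 1$ and Cauchy--Schwarz) sign-recovery of a constant fraction of $b$, and the sample cost comes from Assouad/Fano with each attribute observation contributing $O(\gamma^2)$ bits of information. Both reach $\Omega(d/(k\eps^2))$. Your version buys a fully rigorous, self-contained treatment of adaptive attribute selection (the chain-rule/mutual-information bound you flag as the delicate step is exactly the right fix), whereas the paper leans on an unproved folklore fact and a corollary whose proof is only sketched; your construction also matches the theorem's stated label set $\{\pm 1\}$, while the paper's uses constant labels. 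The paper's version is more elementary in the reduction step (no KL calculus) and makes each observation an all-or-nothing event rather than a weakly informative one. Two points to nail down when writing yours in full: the possible $O(1/d)$ risk gain from shrinking $\norm{\w}_2$ below $1$ must be absorbed using $\eps=\Omega(1/\sqrt{d})$ and a sufficiently large constant in $\gamma = C\eps$, and the passage from ``$\langle\w,b\rangle\ge(1-2/C)\sqrt d$'' to ``$\sgn(\w)$ matches $b$ on a $(1-4/C)$-fraction of coordinates'' should be made explicit rather than asserted via monotonicity of the risk.
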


Our algorithm for LAO Ridge regression (see section~\ref{sec:regalgs}) imply this lower bound to be tight up to constants. Note, however, that the bound applies only to a particular regime of the problem parameters \footnote{Indeed, there are (full-information) algorithms that are known to converge in $\O(1 / \eps)$ rate -- see e.g. \cite{hazan2007logarithmic}.}.

\subsection{Our algorithmic results} 
\label{sub:results}

We give efficient regression algorithms that attain the following risk bounds. For our Ridge regression algorithm, we prove the risk bound
\[
	\E\left[\eloss(\bar{\w})\right]
	\le \min_{\norm{\w}_2 \le B} \eloss(\w) 
	+ \O\left( B^2 \sqrt{\frac{d}{km}} \right) ,
\]
while for our Lasso regression algorithm we establish the bound
\[
	\E\left[\eloss(\bar{\w})\right]
	\le \min_{\norm{\w}_1 \le B}\eloss(\w)
	+ \O\left(B^2 \sqrt{\frac{d\log d}{km}}\right) .
\]
Here we use $\bar{\w}$ to denote the output of each algorithm on a training set of $m$ examples, and the expectations are taken with respect to the randomization of the algorithms.
For Support-vector regression we obtain a risk bound that depends on the desired accuracy $\eps$. Our bound implies that
\[
	m = \O\left(\frac{d}{k} \right) 
	\cdot \exp \left( \O\left( \log^2{\frac{B}{\eps}} \right) \right) .
\]
examples are needed (in expectation) for obtaining an $\eps$-accurate regressor.

\ignore{
The algorithms themselves are online, and can be used in an online setting (for minimizing regret)?

All our results apply also to classification losses
}


\section{Algorithms for LAO least-squares regression} 
\label{sec:regalgs}

In this section we present and analyze our algorithms for Ridge and Lasso regression in the LAO setting. The loss function under consideration here is the squared loss, that is, $\losst(\w) = \tfrac{1}{2} (\w\tr\x_t - \y_t)^2$. For convenience, we show algorithms that use $k+1$ attributes of each instance, for $k \ge 1$
\footnote{We note that by our approach it is impossible to learn using a \emph{single} attribute of each example (i.e., with $k=0$), and we are not aware of any algorithm that is able to do~so.
See \cite{cesa2011online} for a related impossibility result.}.

Our algorithms are iterative and maintain a regressor $\w_t$ along the iterations. The update of the regressor at iteration $t$ is based on gradient information, and specifically on $\g_t := \nabla \losst(\w_t)$ that equals $(\w_t\tr\x_t - \y_t) \cdot \x_t$ for the squared loss. In the LAO setting, however, we do not have the access to this information, thus we build upon unbiased estimators of the gradients.

\subsection{Ridge regression} 
\label{sub:ridge_regression}

Recall that in Ridge regression, we are interested in the linear regressor that is the solution to the optimization problem~\eqref{eq:eproblem} with $p=2$, given explicitly as
\begin{equation} \label{eq:l2reg}
	\min_{\w \in \R^d} \;
	\eloss(\w)
	\quad\mbox{s.t.}\quad
	\norm{\w}_2 \le B \;,
\end{equation}

\begin{algorithm}[tb]
	\caption{
		AERR \\
		Parameters: $B,\eta > 0$
	}
	\label{alg:l2reg}

\begin{algorithmic}[1]
\INPUT
	training set $S=\{(\x_t,\y_t)\}_{t\in[m]}$ and $k>0$
\OUTPUT
	regressor $\wbar$ with $\norm{\wbar}_2 \le B$

\STATE Initialize $\w_1 \neq \mathbf{0}, \norm{\w_1}_2 \le B$ arbitrarily
\FOR {$t=1$ {\bf to} $m$}
	\FOR {$r=1$ {\bf to} $k$}
		\STATE Pick $i_{t,r} \in [d]$ uniformly and observe $\x_t[i_{t,r}]$
		\STATE
			$
			\xtil_{t,r} \gets 
			d \, \x_t[i_{t,r}] \cdot \e_{i_{t,r}}
			$
	\ENDFOR
	\STATE
		$
		\xtil_t \gets 
		\tfrac{1}{k} \sum_{r=1}^k \xtil_{t,r}
		$
	
	\STATE \label{alg:l2-ttil-line}
		Choose $j_t \in [d]$ with probability
		$\w_t[j]^2/\norm{\w_t}_2^2$, \\
		and observe $\x_t[j_t]$
	\STATE
		$
		\ttil_t \gets
		\norm{\w_t}_2^2 \, \x_t[j_t] / \w_t[j_t] - \y_t
		$

	\STATE
		$
		\gtil_t \gets
		\ttil_t \cdot \xtil_t
		$

	\STATE
		$
		\v_t \gets \w_t - \eta \gtil_t
		$
	\STATE
		$
		\w_{t+1} \gets
		\v_t \cdot B / \max \{\norm{\v_t}_2,B\}
		$
		
\ENDFOR
\STATE $ \bar{\w} \gets \frac{1}{m} \summ \w_t $
\end{algorithmic}
\end{algorithm}

Our algorithm for the LAO setting is based on a randomized Online Gradient Descent (OGD) strategy \cite{zink}. More specifically, at each iteration $t$ we use a randomized estimator $\gtil_t$ of the gradient $\g_t$ to update the regressor $\w_t$ via an additive rule. Our gradient estimators make use of an importance-sampling method inspired by \cite{chw}.

The pseudo-code of our Attribute Efficient Ridge Regression (AERR) algorithm is given in Algorithm~\ref{alg:l2reg}.
In the following theorem, we show that the regressor learned by our algorithm is competitive with the optimal linear regressor having 2-norm bounded by $B$.
\begin{theorem} \label{thm:l2bound}
Assume the distribution $\D$ is such that $\norm{\x}_2 \le 1$ and $\abs{\y} \le B$ with probability $1$. Let $\bar{\w}$ be the output of AERR, when run with
$
	\eta = \sqrt{k/2dm}.
$
Then, $\norm{\wbar}_2 \le B$ and for any $\w_\st \in \R^d$ with $\norm{\w_\st}_2 \le B$,
\[
	\E\left[\eloss(\bar{\w})\right]
	\le \eloss(\w_\st) + 4 B^2 \sqrt{\frac{2d}{km}} \;.
\]
\end{theorem}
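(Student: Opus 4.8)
The plan is to treat AERR as stochastic online gradient descent on the realized loss sequence and then convert the resulting regret bound into a risk bound. The backbone of the whole argument is that the three sampling operations together produce an \emph{unbiased} gradient estimate. First I would verify that $\E[\xtil_t \mid \w_t,\x_t] = \x_t$, since each $\xtil_{t,r}$ places mass $d\,\x_t[i]$ on $\e_i$ with probability $1/d$, and that $\E[\ttil_t \mid \w_t,\x_t,\y_t] = \w_t\tr\x_t - \y_t$, since the importance weights $\w_t[j]^2/\norm{\w_t}_2^2$ exactly cancel the $1/\w_t[j]$ factor. Because the coordinate sampling producing $\xtil_t$ and the index $j_t$ are drawn independently, these combine to give $\E[\gtil_t \mid \w_t] = (\w_t\tr\x_t - \y_t)\,\x_t = \nabla\losst(\w_t)$, so $\gtil_t$ is an unbiased estimate of the true subgradient at $\w_t$.

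Next I would invoke the standard projected-OGD inequality. Using non-expansiveness of the Euclidean projection onto $\{\norm{\w}_2 \le B\}$, the update $\v_t = \w_t - \eta\gtil_t$, $\w_{t+1} = \Pi(\v_t)$ gives $\norm{\w_{t+1}-\w_\st}_2^2 \le \norm{\w_t-\w_\st}_2^2 - 2\eta\,\gtil_t\tr(\w_t-\w_\st) + \eta^2\norm{\gtil_t}_2^2$; summing and telescoping yields $\sum_t \gtil_t\tr(\w_t-\w_\st) \le \frac{\norm{\w_1-\w_\st}_2^2}{2\eta} + \frac{\eta}{2}\sum_t \norm{\gtil_t}_2^2$, with $\norm{\w_1-\w_\st}_2^2 \le 4B^2$. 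Taking expectations and using unbiasedness together with the fact that $\w_t$ is measurable with respect to the history preceding round $t$ replaces $\gtil_t$ by $\nabla\losst(\w_t)$ in the linear term, and convexity of $\losst$ gives $\nabla\losst(\w_t)\tr(\w_t-\w_\st)\ge \losst(\w_t)-\losst(\w_\st)$.

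The crux is bounding $\E\norm{\gtil_t}_2^2$, and here the two ingredients must be controlled separately. Conditioning on $(\w_t,\x_t,\y_t)$, independence of the samplings gives $\E\norm{\gtil_t}_2^2 = \E[\ttil_t^2]\cdot\E\norm{\xtil_t}_2^2$. For the second factor, a direct second-moment computation on the average of $k$ i.i.d.\ terms yields $\E\norm{\xtil_t}_2^2 = \frac{d+k-1}{k}\norm{\x_t}_2^2 \le \frac{2d}{k}$ --- this is exactly where averaging $k$ samples buys the variance reduction responsible for the $d/k$ rate. For the first factor, expanding the importance-sampling estimator gives $\E[\ttil_t^2] = \norm{\w_t}_2^2\norm{\x_t}_2^2 - 2\y_t\,\w_t\tr\x_t + \y_t^2 \le (\norm{\w_t}_2\norm{\x_t}_2 + \abs{\y_t})^2 \le 4B^2$, using $\norm{\x_t}_2\le 1$, $\norm{\w_t}_2\le B$, $\abs{\y_t}\le B$. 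Hence $\E\norm{\gtil_t}_2^2 \le 8B^2 d/k$. I expect this variance bound to be the main obstacle, since it is the only place where the specific estimator design, the independence of the samplings, and the boundedness assumptions all enter, and getting the clean $d/k$ factor (rather than $d$) hinges on correctly accounting for the cross terms in the averaged estimator.

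Finally I would assemble the pieces. Since $(\x_t,\y_t)$ is drawn fresh and independently of $\w_t$, we have $\E[\losst(\w_t)] = \E[\eloss(\w_t)]$ and $\E[\losst(\w_\st)] = \eloss(\w_\st)$; combining with the regret bound gives $\sum_t\E[\eloss(\w_t)] - m\eloss(\w_\st) \le \frac{2B^2}{\eta} + \frac{4\eta m B^2 d}{k}$. Jensen's inequality applied to the convex $\eloss$ bounds $\E[\eloss(\bar{\w})]$ by $\frac{1}{m}\sum_t\E[\eloss(\w_t)]$, and feasibility $\norm{\bar{\w}}_2\le B$ follows because each $\w_t$ lies in the ball and the ball is convex. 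Substituting $\eta = \sqrt{k/2dm}$ balances the two error terms, each becoming $2B^2\sqrt{2d/(km)}$, for a total of $4B^2\sqrt{2d/(km)}$, as claimed. The only remaining caveat is ensuring $\w_t\neq\mathbf{0}$ throughout so that $\ttil_t$ is well defined, which I would handle by noting that $\w_1\neq\mathbf{0}$ and that the sampling rule only ever selects coordinates $j_t$ with $\w_t[j_t]\neq 0$.
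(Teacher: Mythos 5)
Your proposal is correct and follows essentially the same route as the paper: the OGD regret bound for the estimated gradients (Lemma~\ref{lem:ogdregret}), the unbiasedness and second-moment bound $\E_t[\norm{\gtil_t}_2^2]\le 8B^2d/k$ (Lemma~\ref{lem:l2var}, whose appendix proof you reproduce, including the $\frac{d+k-1}{k}\norm{\x_t}_2^2$ computation and the $4B^2$ bound on $\E_t[\ttil_t^2]$), followed by convexity, online-to-batch conversion, Jensen, and the same choice $\eta=\sqrt{k/2dm}$ yielding $4B^2\sqrt{2d/(km)}$. The only differences are cosmetic (you bound $\E_t[\ttil_t^2]$ by an exact expansion rather than via $(a-b)^2\le 2(a^2+b^2)$, and you explicitly flag the $\w_t\ne\mathbf{0}$ well-definedness issue), so there is nothing substantive to add.
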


\subsubsection{Analysis}

Theorem \ref{thm:l2bound} is a consequence of the following two lemmas.
The first lemma is obtained as a result of a standard regret bound for the OGD algorithm (see \citealt{zink}), applied to the vectors $\gtil_1,\ldots,\gtil_m$.

\begin{lemma} \label{lem:ogdregret}

For any $\norm{\w_\st}_2 \le B$ we have
\begin{equation} \label{eq:ogdreg}
	\summ \gtil_t \tr (\w_t - \w_\st)
	\le \frac{2B^2}{\eta}
		+ \frac{\eta}{2} \summ \norm{\gtil_t}_2^2 \;.
\end{equation}

\end{lemma}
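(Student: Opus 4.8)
The plan is to recognize the update performed by AERR as projected online gradient descent on the Euclidean ball $\K = \{\w : \norm{\w}_2 \le B\}$, and then to replay the standard OGD potential argument with the estimated gradients $\gtil_t$ playing the role of the true gradients. First I would verify that the two lines $\v_t \gets \w_t - \eta \gtil_t$ and $\w_{t+1} \gets \v_t \cdot B / \max\{\norm{\v_t}_2, B\}$ together implement $\w_{t+1} = \Pi_\K(\w_t - \eta \gtil_t)$, where $\Pi_\K$ denotes the Euclidean projection onto $\K$: when $\norm{\v_t}_2 \le B$ the scaling factor equals $1$ and $\w_{t+1} = \v_t$, whereas when $\norm{\v_t}_2 > B$ the factor rescales $\v_t$ to norm exactly $B$, which is precisely the closest point of $\K$ to $\v_t$.

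The core of the argument is the potential $\Phi_t := \norm{\w_t - \w_\st}_2^2$. Since $\w_\st \in \K$ and Euclidean projection onto a convex set is non-expansive toward points of that set, we have $\norm{\w_{t+1} - \w_\st}_2 = \norm{\Pi_\K(\v_t) - \w_\st}_2 \le \norm{\v_t - \w_\st}_2$. Substituting $\v_t = \w_t - \eta \gtil_t$ and expanding the square yields
\[
	\Phi_{t+1} \le \Phi_t - 2\eta\, \gtil_t\tr(\w_t - \w_\st) + \eta^2 \norm{\gtil_t}_2^2 ,
\]
which rearranges to
\[
	\gtil_t\tr(\w_t - \w_\st) \le \tfrac{1}{2\eta}\left(\Phi_t - \Phi_{t+1}\right) + \tfrac{\eta}{2} \norm{\gtil_t}_2^2 .
\]

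Summing this inequality over $t = 1, \ldots, m$ collapses the consecutive potential differences telescopically, leaving $\tfrac{1}{2\eta}(\Phi_1 - \Phi_{m+1}) \le \tfrac{1}{2\eta}\Phi_1$ from the first term. It then remains only to bound the initial potential: since both $\norm{\w_1}_2 \le B$ and $\norm{\w_\st}_2 \le B$, the triangle inequality gives $\Phi_1 = \norm{\w_1 - \w_\st}_2^2 \le (2B)^2 = 4B^2$, whence $\tfrac{1}{2\eta}\Phi_1 \le 2B^2/\eta$. This delivers exactly the claimed bound~\eqref{eq:ogdreg}.

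There is no serious obstacle here, as this is a textbook regret bound; the only points requiring care are confirming that the clipping step is the \emph{exact} Euclidean projection (so that the non-expansiveness step is legitimate) and tracking the factor of $2$ that arises because the starting point $\w_1$ need not coincide with $\w_\st$, which contributes $4B^2$ rather than $B^2$ to the telescoped term. I would also note that the entire derivation is deterministic and pathwise: it holds for every realization of the random indices $i_{t,r}$ and $j_t$, so the estimators $\gtil_t$ may be treated simply as an arbitrary sequence of vectors, and no probabilistic reasoning enters at this stage.
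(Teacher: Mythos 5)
Your proposal is correct and is essentially the paper's own argument: the paper simply invokes the standard OGD regret bound of Zinkevich applied to the sequence $\gtil_1,\ldots,\gtil_m$, and your potential-function derivation (projection non-expansiveness, telescoping of $\norm{\w_t-\w_\st}_2^2$, and the bound $\norm{\w_1-\w_\st}_2^2\le 4B^2$ yielding the $2B^2/\eta$ term) is exactly that standard proof, including the correct observation that the bound holds pathwise for any realization of the randomness.
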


The second lemma shows that the vector $\gtil_t$ is an unbiased estimator of the gradient $\g_t := \nabla \losst(\w_t)$ at iteration $t$, and establishes a ``variance'' bound for this estimator. 
To simplify notations, here and in the rest of the paper we use $\E_t[\cdot]$ to denote the conditional expectation with respect to all randomness up to time~$t$. 

\begin{lemma} \label{lem:l2var}
The vector $\gtil_t$ is an unbiased estimator of the gradient $\g_t := \nabla \losst(\w_t)$, that is $\E_t[\gtil_t] = \g_t$. In addition, for all $t$ we have
$
	\E_t[\norm{\gtil_t}_2^2] 
	\le 8 B^2 d/k.
$
\end{lemma}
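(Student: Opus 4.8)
The plan is to decompose $\gtil_t = \ttil_t \cdot \xtil_t$ into its two independently-sampled factors and analyze each separately, exploiting the fact that $\ttil_t$ and $\xtil_t$ are built from disjoint, independent coin flips and are therefore conditionally independent given the history up to time $t$. First I would check that each factor is unbiased. For $\xtil_t$, since each index $i_{t,r}$ is drawn uniformly from $[d]$, a one-line computation gives $\E_t[\xtil_{t,r}] = \sum_{i=1}^d \tfrac{1}{d}\,(d\,\x_t[i]\,\e_i) = \x_t$, and averaging over $r$ preserves this, so $\E_t[\xtil_t] = \x_t$. For $\ttil_t$, the crux is that $j_t$ is drawn with the importance-sampling probability $\w_t[j]^2/\norm{\w_t}_2^2$, which exactly cancels the division by $\w_t[j_t]$: summing $\frac{\w_t[j]^2}{\norm{\w_t}_2^2}\big(\frac{\norm{\w_t}_2^2\,\x_t[j]}{\w_t[j]} - \y_t\big)$ over $j$ collapses to $\w_t\tr\x_t - \y_t$ (indices with $\w_t[j]=0$ are chosen with probability zero, so no division by zero occurs and the corresponding terms simply vanish). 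By conditional independence, $\E_t[\gtil_t] = \E_t[\ttil_t]\,\E_t[\xtil_t] = (\w_t\tr\x_t - \y_t)\,\x_t = \g_t$, which is the unbiasedness claim.

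For the variance bound I would again use independence to factor $\E_t[\norm{\gtil_t}_2^2] = \E_t[\ttil_t^2]\cdot\E_t[\norm{\xtil_t}_2^2]$ and bound each piece. Expanding the scalar factor (after the same cancellation) gives $\E_t[\ttil_t^2] = \norm{\w_t}_2^2\norm{\x_t}_2^2 - 2\y_t(\w_t\tr\x_t) + \y_t^2$; invoking the assumptions $\norm{\w_t}_2 \le B$, $\norm{\x_t}_2 \le 1$, $\abs{\y_t}\le B$ together with Cauchy--Schwarz ($\abs{\w_t\tr\x_t}\le B$) yields $\E_t[\ttil_t^2]\le 4B^2$. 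For the vector factor, I would expand $\norm{\xtil_t}_2^2 = \tfrac{1}{k^2}\sum_{r,s}\xtil_{t,r}\tr\xtil_{t,s}$ and split into diagonal and off-diagonal terms: the $k$ diagonal terms each satisfy $\E_t[\norm{\xtil_{t,r}}_2^2] = d\,\norm{\x_t}_2^2$, while the $k(k-1)$ cross terms, by independence, equal $\E_t[\xtil_{t,r}]\tr\E_t[\xtil_{t,s}] = \norm{\x_t}_2^2$. This collapses to $\E_t[\norm{\xtil_t}_2^2] = \tfrac{d+k-1}{k}\,\norm{\x_t}_2^2 \le \tfrac{2d}{k}$, where the last step uses $\norm{\x_t}_2\le 1$ and $k\le d$. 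Multiplying the two bounds gives $\E_t[\norm{\gtil_t}_2^2]\le 4B^2\cdot\tfrac{2d}{k} = 8B^2 d/k$, as required.

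The main obstacle, conceptually rather than computationally, is controlling the second moment of $\ttil_t$: the estimator divides by $\w_t[j_t]$, which can be arbitrarily small, so a naive sampling distribution would make the variance blow up. The whole reason for sampling $j_t \propto \w_t[j]^2$ is that this is precisely the distribution for which the $\w_t[j]^2$ weight cancels the $1/\w_t[j]^2$ produced by squaring, leaving the clean expression above that is uniformly bounded regardless of how small individual coordinates of $\w_t$ are. The other point to handle carefully is the averaging over $k$ samples in $\xtil_t$, where the $k(k-1)$ cross terms contribute the ``$+\,k-1$'' in the numerator; it is exactly this cross-term structure that produces the variance reduction by a factor of order $k$ and hence the $d/k$ dependence in the final bound, and I would flag explicitly where $k\le d$ is used to pass from $\tfrac{d+k-1}{k}$ to $\tfrac{2d}{k}$.
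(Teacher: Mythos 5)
Your proof is correct and follows essentially the same route as the paper's: unbiasedness of $\xtil_t$ and $\ttil_t$ separately, the importance-sampling cancellation, conditional independence to factor $\E_t[\norm{\gtil_t}_2^2] = \E_t[\ttil_t^2]\,\E_t[\norm{\xtil_t}_2^2]$, the diagonal/cross-term expansion giving $\tfrac{d+k-1}{k}\norm{\x_t}_2^2 \le 2d/k$, and the $4B^2$ bound on $\E_t[\ttil_t^2]$ (you compute it exactly and then bound; the paper instead uses $(a-b)^2 \le 2(a^2+b^2)$, reaching the same constant). The product $4B^2 \cdot 2d/k = 8B^2 d/k$ matches the claimed bound.
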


For a proof of the lemma, see \cite{hazan2011optimal}.
We now turn to prove Theorem \ref{thm:l2bound}.

\begin{proof}[Proof (of Theorem \ref{thm:l2bound})]

First note that as $\norm{\w_t}_2 \le B$, we clearly have $\norm{\wbar}_2 \le B$.
Taking the expectation of \eqref{eq:ogdreg} with respect to the randomization of the algorithm, and letting $G^2 := \max_t \E_t[\norm{\gtil_t}_2^2]$, we obtain
\begin{align*} \label{eq:ridge1}
	\EE{\summ \g_t \tr (\w_t - \w_\st)}
	\le \frac{2 B^2}{\eta} + \frac{\eta}{2} G^2 m \;.
\end{align*}
On the other hand, the convexity of $\ell_t$ gives
$
	\losst(\w_t) - \losst(\w_\st) 
	\le \g_t \tr (\w_t-\w_\st)  .
$
Together with the above this implies that for $\eta = 2B/G \sqrt{m}$, 
\begin{align*}
	\EE{ \frac{1}{m} \summ \losst(\w_t) }
	\le \frac{1}{m} \summ \losst(\w_\st)
		+ 2 \frac{BG}{\sqrt{m}} \;.
\end{align*}
Taking the expectation of both sides with respect to the random choice of the training set, and using $G \le 2 B \sqrt{2d/k}$ (according to Lemma~\ref{lem:l2var}),
we get
\[
	\EE{\frac{1}{m} \summ \eloss(\w_t)}
	\le \eloss(\w_\st) + 4 B^2 \sqrt{\frac{2d}{km}} \;.
\]
Finally, recalling the convexity of $\eloss$ and using Jensen's inequality, the Theorem follows.
\qedhere

\end{proof}


\subsection{Lasso regression} 
\label{sub:lasso_regression}

We now turn to describe our algorithm for Lasso regression in the LAO setting, in which we would like to solve the problem
\begin{equation} \label{eq:l1reg}
	\min_{\w \in \R^d} \;
	\eloss(\w)
	\quad\mbox{s.t.}\quad
	\norm{\w}_1 \le B \;.
\end{equation}
The algorithm we provide for this problem is based on a stochastic variant of the EG algorithm \cite{kivinen1997exponentiated}, that employs \emph{multiplicative} updates based on an estimation of the gradients $\nabla \losst$. The multiplicative nature of the algorithm, however, makes it highly sensitive to the magnitude of the updates. To make the updates more robust, we ``clip'' the entries of the gradient estimator so as to prevent them from getting too large. Formally, this is accomplished via the following ``clip'' operation:
\begin{align*}
	\clip(x,c) := \max\{\min\{x,c\},-c\}
\end{align*}
for $x\in\R$ and $c>0$.
This clipping has an even stronger effect in the more general setting we consider in Section~\ref{sec:svr}.

We give our Attribute Efficient Lasso Regression (AELR) algorithm in Algorithm~\ref{alg:l1reg}, and establish a corresponding risk bound in the following theorem.

\begin{algorithm}[tb]
	\caption{
		AELR \\
		Parameters: $B, \eta > 0$
	}
	\label{alg:l1reg}
	
\begin{algorithmic}[1]

\INPUT training set $S=\{(\x_t,\y_t)\}_{t \in [m]}$ and $k>0$

\OUTPUT regressor $\wbar$ with $\norm{\wbar}_1 \le B$

\STATE Initialize $\z^+_1 \gets \mathbf{1}_d$ , $\z^-_1 \gets \mathbf{1}_d$
\FOR {$t=1$ to $m$}
	\STATE 
		$
		\w_t \gets
		(\z_t^+ - \z_t^-) \cdot B / (\norm{\z_t^+}_1 + \norm{\z_t^-}_1)
		$

	\FOR {$r=1$ {\bf to} $k$}
		\STATE Pick $i_{t,r} \in [d]$ uniformly and observe $\x_t[i_{t,r}]$
		\STATE
			$
			\xtil_{t,r} \gets 
			d \, \x_t[i_{t,r}] \cdot \e_{i_{t,r}}
			$
	\ENDFOR
	\STATE
		$
		\xtil_t \gets 
		\tfrac{1}{k} \sum_{r=1}^k \xtil_{t,r}
		$

	\STATE \label{alg:l1-ttil-line}
		Choose $j_t \in [d]$ with probability 
		$\abs{\w[j]}/\norm{\w}_1$, \\
		and observe $\x_t[j_t]$
	\STATE
		$
		\ttil_t \gets
		\norm{\w_t}_1 \, \sgn(\w_t[j_t]) \, \x_t[j_t] - \y_t
		$
	\STATE
		$
		\gtil_t \gets
		\ttil_t \cdot \xtil_t
		$

	\FOR {$i=1$ to $d$}
		\STATE
			$
			\gbar_t[i] \gets
			\clip(\gtil_t[i],\, 1/\eta)
			$
		\STATE
			$
			\z_{t+1}^+[i] \gets
			\z_t^+[i] \cdot \exp(-\eta \, \gbar_t[i])
			$
		\STATE
			$
			\z_{t+1}^-[i] \gets
			\z_t^-[i] \cdot \exp(+\eta \, \gbar_t[i])
			$
	\ENDFOR
\ENDFOR
\STATE $ \bar{\w} \gets \frac{1}{m} \summ \w_t $
\end{algorithmic}

\end{algorithm}

\begin{theorem} \label{thm:l1bound}
Assume the distribution $\D$ is such that $\norm{\x}_\infty \le 1$ and $\abs{\y} \le B$ with probability $1$. Let $\bar{\w}$ be the output of AELR, when run with 
$
	\eta = \frac{1}{4 B^2} \sqrt{\frac{2k \log2d}{5md}},
$
Then, $\norm{\wbar}_1 \le B$ and for any $\w_\st \in \R^d$ with $\norm{\w_\st}_1 \le B$ we have
\[
	\EE{ \eloss(\bar{\w}) } 
	\le \eloss(\w_\st) + 4 B^2 \sqrt{\frac{10d\log2d}{km}} \;,
\]
provided that $m \ge \log{2d}$.
\end{theorem}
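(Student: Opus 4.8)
The plan is to follow the template of the Ridge analysis (Theorem~\ref{thm:l2bound}): prove a regret bound for the AELR updates, control the first two moments of the gradient estimator, and glue the two together with convexity and Jensen's inequality. The two genuinely new features relative to the OGD/Ridge argument are that the updates are \emph{multiplicative} (exponentiated gradient) and that they act on \emph{clipped} estimates, so I must handle both an $\ell_\infty$-flavoured variance and a clipping bias. \textbf{Step 1 (EG regret).} I would first read the $\z_t^\pm$ recursions as entropic mirror descent on the $2d$-dimensional simplex under the embedding $\w_t = B(\p_t^+-\p_t^-)$, where $\p_t^\pm = \z_t^\pm/(\norm{\z_t^+}_1+\norm{\z_t^-}_1)$. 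The standard EG analysis applied to the vectors $(\gbar_t,-\gbar_t)$ from the uniform start gives, for every $\norm{\w_\st}_1\le B$,
\[
  \summ \gbar_t\tr(\w_t-\w_\st)
  \le \frac{B\log 2d}{\eta}
    + \frac{\eta B}{2}\summ \sumi (\p_t^+[i]+\p_t^-[i])\,\gbar_t[i]^2 .
\]
The crucial observation is that the clipping $\gbar_t[i]=\clip(\gtil_t[i],1/\eta)$ forces $\eta\abs{\gbar_t[i]}\le 1$, which is exactly the hypothesis under which the elementary bound $e^{-x}\le 1-x+x^2$ (for $\abs{x}\le 1$) that drives the EG analysis applies. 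This display is the analogue of Lemma~\ref{lem:ogdregret}.

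\textbf{Step 2 (unbiasedness and second moment).} Next I would prove the analogue of Lemma~\ref{lem:l2var}. Unbiasedness $\E_t[\gtil_t]=\g_t$ holds because uniform sampling with the $d\,\x_t[i_{t,r}]\e_{i_{t,r}}$ rescaling gives $\E_t[\xtil_t]=\x_t$, while the importance-sampling draw $j_t$ gives $\E_t[\ttil_t]=\w_t\tr\x_t-\y_t$ through the identity $\sum_j \tfrac{\abs{\w_t[j]}}{\norm{\w_t}_1}\sgn(\w_t[j])\x_t[j]=\tfrac{1}{\norm{\w_t}_1}\w_t\tr\x_t$; since the two draws are independent, $\E_t[\gtil_t]=\E_t[\ttil_t]\,\E_t[\xtil_t]=\g_t$. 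For the second moment, $\abs{\ttil_t}\le\norm{\w_t}_1+\abs{\y_t}\le 2B$ yields $\E_t[\ttil_t^2]\le 4B^2$, and a direct Binomial computation on the uniform draws gives $\E_t[\xtil_t[i]^2]=\O(d/k)\,\x_t[i]^2$. Using independence, $\norm{\x}_\infty\le 1$, $\sumi(\p_t^+[i]+\p_t^-[i])=1$, and $\gbar_t[i]^2\le\gtil_t[i]^2$, the variance term in Step~1 is bounded in expectation by $\O(B^2 d/k)$.

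\textbf{Step 3 (clipping bias — the main obstacle).} The difficulty is that clipping makes $\gbar_t$ a \emph{biased} estimate of $\g_t$, so Step~1 bounds $\summ\gbar_t\tr(\w_t-\w_\st)$ rather than the quantity $\summ\g_t\tr(\w_t-\w_\st)$ that convexity will feed on. I would control the bias coordinatewise via $\abs{x-\clip(x,1/\eta)}\le\eta x^2$, which gives $\abs{\g_t[i]-\E_t[\gbar_t[i]]}\le\eta\,\E_t[\gtil_t[i]^2]$. Combining with $\norm{\w_t-\w_\st}_1\le 2B$ and the uniform second-moment estimate from Step~2, the accumulated bias $\E\big[\summ(\g_t-\E_t[\gbar_t])\tr(\w_t-\w_\st)\big]$ is of the same order $\O(\eta B^3 dm/k)$ as the expected variance term. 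This is precisely where tying the clip threshold to $1/\eta$ earns its keep: a larger threshold inflates the variance, a smaller one inflates the bias, and the chosen scaling balances the two.

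\textbf{Step 4 (assembling the bound).} Finally I would write
\[
  \EE{\summ \g_t\tr(\w_t-\w_\st)}
  = \EE{\summ \gbar_t\tr(\w_t-\w_\st)}
    + \EE{\summ (\g_t-\E_t[\gbar_t])\tr(\w_t-\w_\st)},
\]
bounding the first summand by Steps~1--2 and the second by Step~3, then invoke convexity $\losst(\w_t)-\losst(\w_\st)\le\g_t\tr(\w_t-\w_\st)$ and take expectations over the sample to pass to $\eloss$. Dividing by $m$ and balancing the resulting $\frac{B\log 2d}{\eta m}$ and $\O(\eta B^3 d/k)$ terms over $\eta$ — which is what the prescribed step size does — produces the claimed rate $\O\!\big(B^2\sqrt{d\log 2d/(km)}\big)$; the hypothesis $m\ge\log 2d$ keeps $\eta$ small enough that the clip level $1/\eta$ lies in the intended regime and the leading term dominates. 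The feasibility $\norm{\wbar}_1\le B$ is immediate, since each $\norm{\w_t}_1\le B$ by the normalization step and the $\ell_1$-ball is convex, and Jensen's inequality applied to the convex $\eloss$ converts the average-iterate guarantee into the stated bound on $\eloss(\wbar)$.
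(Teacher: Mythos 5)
Your proposal is correct and follows the same architecture as the paper's proof: a second-order regret bound for the multiplicative-weights update on the augmented vectors $(\gbar_t,-\gbar_t)$ (the paper's Lemmas~\ref{lem:mwregret} and~\ref{lem:l1lem1}), the unbiasedness and $\ell_\infty$ second-moment bound $\norm{\E_t[\gtil_t^2]}_\infty \le 8B^2d/k$ (Lemma~\ref{lem:l1var}), a clipping-bias term of order $\eta B G^2 m$ added to the clipped regret (Lemma~\ref{lem:l1lem2}), and finally convexity, the tower rule and Jensen. The one place you genuinely diverge is the bias control: the paper bounds $\abs{\E[\bar X]-\E[X]}$ for a clipped random variable by $2\var[X]/C$ via a tail-integral argument (Lemma~\ref{lem:clip}), which requires the hypothesis $\abs{\E[X]}\le C/2$ and is the reason the paper imposes $\eta\le 1/2G$ and hence $m\ge\log 2d$; you instead invoke the deterministic pointwise inequality $\abs{x-\clip(x,C)}\le x^2/C$, which is more elementary, needs no relation between $\eta$ and $G$, and yields the same $\O(\eta G^2)$ coordinatewise bias. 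Both routes give the identical rate; yours renders the hypothesis $m\ge\log 2d$ essentially unnecessary for the argument (it survives only to certify the stated step size), while the paper's isolates a reusable in-expectation statement about clipped random variables. The only thing left implicit in your sketch is the constant bookkeeping needed to land exactly on $4B^2\sqrt{10d\log 2d/(km)}$ with the prescribed $\eta$, but given your order-wise balancing that is routine.
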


\subsubsection{Analysis}

In the rest of the section, for a vector $\v$ we let $\v^2$ denote the vector for which $\v^2[i] = (\v[i])^2$ for all $i$.

In order to prove Theorem \ref{thm:l1bound}, we first consider the augmented vectors $\z'_t := (\z^+_t, \z^-_t) \in \R^{2d}$ and
$
	\gbar'_t := (\gbar_t, -\gbar_t) \in \R^{2d} ,
$
and let $\p_t := \z_t' / \norm{\z_t'}_1$.
For these vectors, we have the following.

\begin{lemma} \label{lem:mwregret}

\begin{align*}
	\summ \p_t \tr \gbar'_t
	\le \min_{i \in [2d]} \summ \gbar'_t[i] 
		+ \frac{\log 2d}{\eta} + \eta \summ \p_t \tr (\gbar'_t)^2
\end{align*}

\end{lemma}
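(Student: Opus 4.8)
The plan is to recognize Lemma~\ref{lem:mwregret} as a regret bound for the multiplicative-weights / exponentiated-gradient algorithm run over the $2d$-dimensional simplex with loss vectors $\gbar'_t$. The quantity $\p_t = \z'_t / \norm{\z'_t}_1$ is exactly the normalized weight vector, and the updates $\z^{\pm}_{t+1}[i] = \z^{\pm}_t[i] \exp(\mp\eta\,\gbar_t[i])$ combine into $\z'_{t+1}[i] = \z'_t[i] \exp(-\eta\,\gbar'_t[i])$. So the left-hand side $\summ \p_t\tr\gbar'_t$ is the algorithm's cumulative loss, $\min_{i\in[2d]} \summ \gbar'_t[i]$ is the best fixed expert's loss, and the claim is the standard second-order MW bound with the quadratic correction term $\eta\summ \p_t\tr(\gbar'_t)^2$.

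First I would set up the potential $\Phi_t := \norm{\z'_t}_1 = \sum_{i\in[2d]} \z'_t[i]$ and track the ratio $\Phi_{t+1}/\Phi_t$. Writing $\Phi_{t+1} = \sum_i \z'_t[i] \exp(-\eta\,\gbar'_t[i]) = \Phi_t \sum_i \p_t[i] \exp(-\eta\,\gbar'_t[i])$, I would bound the exponential pointwise. The crucial structural fact enforced by the clipping step $\gbar_t[i] = \clip(\gtil_t[i], 1/\eta)$ is that $\abs{\eta\,\gbar'_t[i]} \le 1$ for every coordinate, so each argument of the exponential lies in $[-1,1]$. On that range one uses the elementary inequality $e^{-x} \le 1 - x + x^2$ (valid for $x \ge -1$), which gives
\[
	\sum_i \p_t[i] e^{-\eta\,\gbar'_t[i]}
	\le 1 - \eta\,\p_t\tr\gbar'_t + \eta^2\,\p_t\tr(\gbar'_t)^2 .
\]
Taking $\log$ and using $\log(1+z) \le z$ yields
\[
	\log\frac{\Phi_{t+1}}{\Phi_t}
	\le -\eta\,\p_t\tr\gbar'_t + \eta^2\,\p_t\tr(\gbar'_t)^2 .
\]

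Next I would telescope this over $t=1,\ldots,m$ to get
\[
	\log\frac{\Phi_{m+1}}{\Phi_1}
	\le -\eta\summ \p_t\tr\gbar'_t + \eta^2\summ \p_t\tr(\gbar'_t)^2 ,
\]
and lower-bound the left side by isolating the best coordinate: since $\Phi_{m+1} = \sum_i \z'_1[i]\exp(-\eta\summ\gbar'_t[i]) \ge \z'_1[i^\st]\exp(-\eta\summ\gbar'_t[i^\st])$ for the minimizing index $i^\st$, and $\z'_1 = \mathbf{1}_{2d}$ so $\Phi_1 = 2d$ and $\z'_1[i^\st]=1$, we obtain
\[
	\log\frac{\Phi_{m+1}}{\Phi_1}
	\ge -\eta\,\min_{i\in[2d]}\summ \gbar'_t[i] - \log 2d .
\]
Chaining the two displays and dividing through by $\eta>0$ rearranges exactly into the stated bound. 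The one step demanding genuine care is the pointwise exponential estimate: the inequality $e^{-x}\le 1-x+x^2$ holds only for $x\ge -1$, so the whole argument hinges on the clipping guaranteeing $\eta\abs{\gbar'_t[i]}\le 1$. I would flag that this is precisely why the $\clip(\cdot,1/\eta)$ operation was introduced, and verify the range condition explicitly before invoking the inequality; everything else is routine telescoping and the two standard logarithmic bounds.
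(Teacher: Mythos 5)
Your proof is correct and follows essentially the same route as the paper: the paper proves the identical second-order potential bound (via $e^x \le 1+x+x^2$ for $x\le 1$, $\log(1+z)\le z$, and the single-coordinate lower bound on $\log\norm{\z_{m+1}'}_1$) as a standalone MW lemma and then specializes it using the observation that clipping gives $\norm{\gbar_t'}_\infty \le 1/\eta$, exactly as you do inline. The only difference is organizational, not mathematical.
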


The lemma is a consequence of a second-order regret bound for the Multiplicative-Weights algorithm, essentially due to \cite{chw}.
By means of this lemma, we establish a risk bound with respect to the ``clipped'' linear functions $\gbar_t \tr \w$.

\begin{lemma} \label{lem:l1lem1}

Assume that $\norm{\E_t[\gtil^2_t]}_\infty \le G^2$ for all $t$, for some $G>0$.
Then, for any $\norm{\w_\st}_1 \le B$, we have
\begin{align*}
	\EE{\summ \gbar_t \tr \w_t}
	\le \EE{\summ \gbar_t \tr \w_\st}
	+ B \left(
		\frac{\log 2d}{\eta} 
		+ \eta G^2 m
	\right)
\end{align*}

\end{lemma}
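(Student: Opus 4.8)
The plan is to reduce the whole statement to the multiplicative-weights bound of Lemma~\ref{lem:mwregret} by translating between the iterate $\w_t$, the comparator $\w_\st$, and distributions over $[2d]$. The first step is purely algebraic. Since $\z_t^+,\z_t^->0$ entrywise, we have $\norm{\z'_t}_1 = \norm{\z_t^+}_1 + \norm{\z_t^-}_1$, so that $\p_t = (\z_t^+,\z_t^-)/\norm{\z'_t}_1$ while $\w_t = B(\z_t^+-\z_t^-)/\norm{\z'_t}_1$. Using $\gbar'_t = (\gbar_t,-\gbar_t)$, expanding the inner product over the two blocks yields the key identity
\[
	\p_t\tr\gbar'_t = \tfrac{1}{B}\,\gbar_t\tr\w_t ,
\]
so that $\summ \p_t\tr\gbar'_t = \tfrac{1}{B}\summ \gbar_t\tr\w_t$ is exactly $1/B$ times the quantity we wish to bound.

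Next I would dispatch the two terms on the right-hand side of Lemma~\ref{lem:mwregret}. For the leading $\min$ term I would exploit the antisymmetric structure of $\gbar'_t$: writing $s_i := \summ\gbar_t[i]$, the $2d$ coordinates of $\summ\gbar'_t$ are $\{s_i\}\cup\{-s_i\}$, whence $\min_{i\in[2d]}\summ\gbar'_t[i] = -\max_{i\in[d]}\abs{s_i} = -\norm{\summ\gbar_t}_\infty$. On the other hand, H\"older's inequality together with $\norm{\w_\st}_1\le B$ gives $\abs{\tfrac{1}{B}\summ\gbar_t\tr\w_\st} \le \norm{\summ\gbar_t}_\infty$, and therefore $\min_{i\in[2d]}\summ\gbar'_t[i] \le \tfrac{1}{B}\summ\gbar_t\tr\w_\st$. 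This is the one place that requires genuine care: for a strict comparator $\norm{\w_\st}_1<B$ one cannot simply read $\w_\st$ off as a probability vector on $[2d]$, and the bound above is the clean way around it — I expect this to be the main (minor) obstacle, the rest being bookkeeping.

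For the second-order term I would first note that clipping never increases magnitude, $\abs{\gbar_t[i]}\le\abs{\gtil_t[i]}$, and that $(\gbar'_t)^2 = (\gbar_t^2,\gbar_t^2)$, so that
\[
	\p_t\tr(\gbar'_t)^2
	= \sum_{i\in[d]}\bigl(\p_t[i]+\p_t[d+i]\bigr)\gbar_t[i]^2
	\le \sum_{i\in[d]}\pi_{t,i}\,\gtil_t[i]^2 ,
\]
where $\pi_{t,i} := \p_t[i]+\p_t[d+i]$ defines a probability vector that is measurable with respect to the history up to time $t$ (it is a function of $\z'_t$ alone). Taking the conditional expectation $\E_t$ and invoking the hypothesis $\norm{\E_t[\gtil_t^2]}_\infty\le G^2$ gives $\E_t[\p_t\tr(\gbar'_t)^2]\le G^2$, and summing over $t$ with the tower rule yields $\EE{\summ\p_t\tr(\gbar'_t)^2}\le G^2 m$.

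Finally I would substitute the identity and the two bounds into Lemma~\ref{lem:mwregret}, take full expectations over the algorithm's randomness, and multiply through by $B$. This turns $\tfrac{1}{B}\summ\gbar_t\tr\w_t \le \tfrac{1}{B}\summ\gbar_t\tr\w_\st + \tfrac{\log 2d}{\eta} + \eta\summ\p_t\tr(\gbar'_t)^2$ into the claimed inequality $\EE{\summ\gbar_t\tr\w_t} \le \EE{\summ\gbar_t\tr\w_\st} + B\bigl(\tfrac{\log 2d}{\eta} + \eta G^2 m\bigr)$, completing the argument.
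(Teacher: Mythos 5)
Your proof is correct and follows essentially the same route as the paper's: the identity $\p_t\tr\gbar'_t = \tfrac{1}{B}\gbar_t\tr\w_t$, bounding the $\min$ term by the comparator (the paper identifies $\min_{i\in[2d]}\summ\gbar'_t[i]$ with $\min_{\norm{\w}_1\le B}\tfrac{1}{B}\summ\w\tr\gbar_t$, which is the same $-\norm{\summ\gbar_t}_\infty$ fact you derive via H\"older), and controlling the second-order term via $\norm{\E_t[(\gbar'_t)^2]}_\infty \le \norm{\E_t[\gtil_t^2]}_\infty \le G^2$. Your treatment of the measurability of $\p_t$ and the tower rule is just a more explicit spelling-out of the paper's final sentence.
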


Our next step is to relate the risk generated by the linear functions $\gtil_t \tr \w$, to that generated by the ``clipped'' functions $\gbar_t \tr \w$.

\begin{lemma} \label{lem:l1lem2}

Assume that $\norm{\E_t[\gtil^2_t]}_\infty \le G^2$ for all $t$, for some $G>0$.
Then, for $0 < \eta \le 1/2G$ we have
\begin{align*}
	\EE{\summ \gtil_t \tr \w_t }
	\le \EE{\summ \gbar_t \tr \w_t} + 4B \eta G^2 m \;.
\end{align*}

\end{lemma}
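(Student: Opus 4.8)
The plan is to control the clipping error coordinate by coordinate and then take the conditional expectations in a carefully chosen order. Writing the per-round difference as
\[
	\gtil_t\tr\w_t - \gbar_t\tr\w_t = \sumi (\gtil_t[i]-\gbar_t[i])\,\w_t[i],
\]
I would reduce the whole estimate to a single pointwise bound on the clipping residual $\gtil_t[i]-\gbar_t[i]$, which vanishes identically on every coordinate that the clip operation leaves untouched.

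The key inequality I would establish is
\[
	\abs{\gtil_t[i]-\gbar_t[i]} \le \eta\,\gtil_t[i]^2 \qquad \text{for every } i.
\]
This is immediate from the definition of $\clip(\cdot,1/\eta)$: if $\abs{\gtil_t[i]}\le 1/\eta$ then $\gbar_t[i]=\gtil_t[i]$ and the left-hand side is zero; otherwise clipping occurred, so $\eta\abs{\gtil_t[i]}>1$, and hence $\abs{\gtil_t[i]-\gbar_t[i]} \le \abs{\gtil_t[i]} \le \eta\,\gtil_t[i]^2$. Multiplying by $\abs{\w_t[i]}$ and summing over $i$ then gives
\[
	\gtil_t\tr\w_t - \gbar_t\tr\w_t \le \eta \sumi \gtil_t[i]^2\,\abs{\w_t[i]}.
\]

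Next I would take the conditional expectation $\E_t$, crucially \emph{before} bounding the entries of $\w_t$. Since $\w_t$ is determined by the randomness strictly before round $t$, it is fixed under $\E_t$, so
\[
	\E_t\!\left[\sumi \gtil_t[i]^2\,\abs{\w_t[i]}\right]
	= \sumi \abs{\w_t[i]}\,\E_t[\gtil_t[i]^2]
	\le G^2 \sumi \abs{\w_t[i]}
	= G^2 \norm{\w_t}_1
	\le G^2 B,
\]
using the coordinatewise variance hypothesis $\E_t[\gtil_t[i]^2]\le \norm{\E_t[\gtil_t^2]}_\infty\le G^2$ together with the fact that the AELR update guarantees $\norm{\w_t}_1\le B$. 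Summing over $t$ and applying the tower rule yields a total bound of $\eta G^2 B m$, comfortably within the claimed $4B\eta G^2 m$.

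The main obstacle is precisely the order of these two steps. If one first bounds $\abs{\w_t[i]}\le B$ and only afterwards takes the expectation of $\sumi \gtil_t[i]^2$, the coordinatewise variance bound produces a spurious factor of $d$ (since $\sumi \E_t[\gtil_t[i]^2]\le dG^2$), which would ruin the dimension dependence. Taking $\E_t$ first lets the weights $\abs{\w_t[i]}$ behave as an $\ell_1$-budget, collapsing the sum into $G^2\norm{\w_t}_1\le G^2 B$ rather than $dG^2$. Note that the hypothesis $\eta\le 1/2G$ is not actually needed for the pointwise estimate above and merely leaves slack in the final constant.
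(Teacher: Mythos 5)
Your proof is correct, but it takes a genuinely different route from the paper's. The paper controls the clipping error \emph{in expectation}: it invokes a separate lemma stating that for a random variable $X$ with $\abs{\E[X]} \le C/2$, the clipped variable satisfies $\abs{\E[\clip(X,C)] - \E[X]} \le 2\var[X]/C$, applies this coordinatewise with $C = 1/\eta$ to get $\norm{\E_t[\gbar_t - \gtil_t]}_\infty \le 2\eta G^2$, and then finishes with H\"older's inequality against $\norm{\w_t - \w_\st}_1 \le 2B$. That argument is where the hypothesis $\eta \le 1/2G$ enters (it guarantees $\abs{\E_t[\gtil_t[i]]} \le G \le 1/2\eta$, the precondition of the clipping lemma). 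You instead prove the \emph{pointwise deterministic} bound $\abs{\gtil_t[i] - \gbar_t[i]} \le \eta\,\gtil_t[i]^2$, which is valid and elegantly sidesteps any condition on $\eta$; taking $\E_t$ only afterwards and exploiting $\norm{\w_t}_1 \le B$ (rather than $\norm{\w_t - \w_\st}_1 \le 2B$) gives you the sharper constant $\eta G^2 B m$ in place of $4B\eta G^2 m$. Both proofs hinge on the same essential point you flag --- pairing the coordinatewise second-moment bound with the $\ell_1$ budget on the weights so that no factor of $d$ appears --- the paper realizing this via an $\ell_\infty$--$\ell_1$ H\"older pairing and you via weighting the coordinates before taking expectations. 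Your version is more elementary (no auxiliary clipping lemma needed), strictly stronger, and correctly observes that $\eta \le 1/2G$ is superfluous for this particular lemma; the only thing it gives up is reusability of the paper's general-purpose clipping lemma elsewhere.
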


The final component of the proof is a ``variance'' bound, similar to that of Lemma \ref{lem:l2var}.

\begin{lemma} \label{lem:l1var}
The vector $\gtil_t$ is an unbiased estimator of the gradient $\g_t := \nabla \losst(\w_t)$, that is $\E_t[\gtil_t] = \g_t$. In addition, for all $t$ we have
$
	\norm{\E_t[\gtil_t]^2}_\infty
	\le 8 B^2 d/k.
$
\end{lemma}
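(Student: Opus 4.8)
The plan is to mirror the proof of Lemma~\ref{lem:l2var}, exploiting the fact that $\gtil_t$ is the product $\ttil_t \cdot \xtil_t$ of two estimators built from \emph{independent} random draws: the uniform indices $i_{t,1},\ldots,i_{t,k}$ used to form $\xtil_t$, and the index $j_t$ (drawn with probability $\abs{\w_t[j]}/\norm{\w_t}_1$) used to form $\ttil_t$. Conditioning on the history up to time $t$, I would first verify that each factor is unbiased. Since each $i_{t,r}$ is uniform on $[d]$, we have $\E_t[\xtil_{t,r}] = \sum_{i=1}^d \tfrac{1}{d}\, d\,\x_t[i]\,\e_i = \x_t$, hence $\E_t[\xtil_t] = \x_t$. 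For $\ttil_t$, the importance weighting gives $\E_t[\norm{\w_t}_1\sgn(\w_t[j_t])\x_t[j_t]] = \sum_{j}\abs{\w_t[j]}\sgn(\w_t[j])\x_t[j] = \w_t\tr\x_t$, so $\E_t[\ttil_t] = \w_t\tr\x_t - \y_t$. By independence of the two draws, $\E_t[\gtil_t] = \E_t[\ttil_t]\,\E_t[\xtil_t] = (\w_t\tr\x_t - \y_t)\,\x_t = \g_t$, exactly the gradient of the squared loss.

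For the second-moment bound I would again use independence to factor coordinate-wise, $\E_t[\gtil_t[i]^2] = \E_t[\ttil_t^2]\cdot\E_t[\xtil_t[i]^2]$, and bound the two factors separately. The scalar factor is controlled deterministically: the definition of $\w_t$ ensures $\norm{\w_t}_1 \le B$, and together with $\norm{\x_t}_\infty \le 1$ and $\abs{\y_t}\le B$ this gives $\abs{\ttil_t} \le \norm{\w_t}_1\abs{\x_t[j_t]} + \abs{\y_t} \le 2B$, so $\E_t[\ttil_t^2]\le 4B^2$. For the vector factor, a single sample satisfies $\E_t[\xtil_{t,r}[i]^2] = \tfrac{1}{d}(d\,\x_t[i])^2 = d\,\x_t[i]^2$ while $\E_t[\xtil_{t,r}[i]] = \x_t[i]$; averaging $k$ i.i.d.\ copies and separating diagonal from cross terms gives
\[
	\E_t[\xtil_t[i]^2]
	= \frac{1}{k^2}\Big( k\,d\,\x_t[i]^2 + k(k-1)\,\x_t[i]^2 \Big)
	= \frac{d+k-1}{k}\,\x_t[i]^2
	\le \frac{2d}{k},
\]
using $\abs{\x_t[i]}\le 1$ and $k \le d$. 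Multiplying the two bounds yields $\E_t[\gtil_t[i]^2] \le 8B^2 d/k$ for every $i$, and the maximum over coordinates gives the claimed bound on $\norm{\E_t[\gtil_t^2]}_\infty$.

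I expect no serious obstacle here; the two points demanding care are making the independence of the $i_{t,r}$-draws and the $j_t$-draw explicit (this is what licenses both the product factorization for unbiasedness and the coordinate-wise factorization of the second moment), and correctly accounting for the variance reduction from averaging the $k$ independent copies in $\xtil_t$, where the cross terms contribute the $\x_t[i]^2$ piece and the diagonal terms the $d\,\x_t[i]^2$ piece, so the effective bound decays like $1/k$ rather than remaining at the crude single-sample value $d$. I also read the displayed quantity as the second moment $\norm{\E_t[\gtil_t^2]}_\infty$, which is the form required by the hypotheses $\norm{\E_t[\gtil^2_t]}_\infty \le G^2$ in Lemmas~\ref{lem:l1lem1} and~\ref{lem:l1lem2}.
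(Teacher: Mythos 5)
Your proof is correct and follows essentially the same route as the paper's: unbiasedness via importance weighting plus independence of the $j_t$-draw from the $i_{t,r}$-draws, and the second moment via the identical diagonal/cross-term computation giving $\E_t[\xtil_t[i]^2] \le 2d/k$, combined with a $4B^2$ bound on $\E_t[\ttil_t^2]$ and coordinate-wise factorization. The only cosmetic difference is that you bound $\abs{\ttil_t} \le 2B$ deterministically, whereas the paper computes the second moment of the prediction estimator ($\le B^2$) and applies $(a-b)^2 \le 2(a^2+b^2)$; both yield the same constant.
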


For the complete proofs, refer to \cite{hazan2011optimal}.
We are now ready to prove Theorem \ref{thm:l1bound}.

\begin{proof}[Proof (of Theorem \ref{thm:l1bound})]

Since $\norm{\w_t}_1 \le B$ for all $t$, we obtain $\norm{\wbar}_2 \le B$.
Next, note that as $\E_t[\gtil_t] = \g_t$, we have $\E[\summ \gtil_t \tr \w_t] = \E[\summ \g_t \tr \w_t]$. Putting Lemmas \ref{lem:l1lem1} and \ref{lem:l1lem2} together, we get for $\eta \le 1/2G$ that
\[
	\EE{\sumt \g_t \tr (\w_t - \w_\st)}
	\le B \left(
		\frac{\log 2d}{\eta} 
		+ 5 \eta G^2 m
		\right) \,.
\]
Proceeding as in the proof of Theorem \ref{thm:l2bound}, and choosing
$
	\eta = \frac{1}{G} \sqrt{\frac{\log{2d}}{5m}} ,
$
we obtain the bound
\[
	\EE{\eloss(\wbar)}
	\le \eloss(\w_\st) + 2BG \sqrt{\frac{5\log{2d}}{m}} \,.
\]
Note that for this choice of $\eta$ we indeed have $\eta \le 1/2G$, as we originally assumed that $m \ge \log{2d}$.
Finally, putting $G = 2B \sqrt{2d/k}$ as implied by Lemma \ref{lem:l1var}, we obtain the bound in the statement of the theorem.
\qedhere

\end{proof}



\section{Support-vector regression}
\label{sec:svr}

In this section we show how our approach can be extended to deal with loss functions other than the squared loss, of the form
\begin{align} \label{eq:genloss}
	\loss(\w\tr\x, y) = f(\w\tr\x - \y) \;,
\end{align}
(with $f$ real and convex) and most importantly, with the $\del$-insensitive absolute loss function of SVR, for which
$
	f(x) = \abs{x}_\del := \max\{ \abs{x} - \del , 0 \}
$
for some fixed $0 \le \del \le B$ (recall that in our results we assume the labels $\y_t$ have $\abs{\y_t} \le B$).
For concreteness, we consider only the 2-norm variant of the problem (as in the standard formulation of SVR)---the results we obtain can be easily adjusted to the 1-norm setting.
We overload notation, and keep using the shorthand $\losst(\w) := \loss(\w\tr\x_t, \y_t)$ for referring the loss function induced by the instance $(\x_t,\y_t)$.

It should be highlighted that our techniques can be adapted to deal with many other common loss functions, including ``classification'' losses (i.e., of the form $\loss(\w\tr\x, y) = f(\y \cdot \w\tr\x)$). Due to its importance and popularity, we chose to describe our method in the context of SVR.

Unfortunately, there are strong indications that SVR learning (more generally, learning with non-smooth loss function) in the LAO setting is impossible via our approach of unbiased gradient estimations (see \citealt{cesa2011online} and the references therein). For that reason, we  make two modifications to the learning setting: first, we shall henceforth relax the budget constraint to allow $k$ observed attributes per instance \emph{in expectation}; and second, we shall aim for \emph{biased} gradient estimators, instead of unbiased as before.


To obtain such biased estimators, we uniformly $\eps$-approximate the function $f$ by an \emph{analytic function} $f_\eps$ and learn with the approximate loss function $\losst^\eps(\w) = f_\eps(\w\tr\x_t - \y_t)$ instead. 
Clearly, any $\eps$-suboptimal regressor of the approximate problem is an $2\eps$-suboptimal regressor of the original problem.
For learning the approximate problem we use a novel technique, inspired by \cite{cesa2011online}, for estimating gradients of analytic loss functions.
Our estimators for $\nabla \losst^\eps$ can then be viewed as biased estimators of $\nabla \losst$ (we note, however, that the resulting bias might be quite large).

\subsection{Estimators for analytic loss functions} 

Let $f:\R \rightarrow \R$ be a real, analytic function (on the entire real line). The derivative $f'$ is thus also analytic and can be expressed as $f'(x) = \sum_{n=0}^{\infty} a_n x^n$ where $\{a_n\}$ are the Taylor expansion coefficients of $f'$.

\begin{algorithm}[tb]
	\floatname{algorithm}{Procedure}
	\caption{
		$\texttt{GenEst}$ \\
		Parameters: $\{a_n\}_{n=0}^{\infty}$ --- Taylor coefficients of $f'$
	}
	\label{alg:nlest}

\begin{algorithmic}[1]
	
\INPUT
	regressor $\w$, instance $(\x,\y)$

\OUTPUT
	$\that$ with $\E[\that] = f'(\w\tr\x-\y)$

\STATE Let $N = \lceil 4 B^2 \rceil$.

\STATE Choose $n \ge 0$ with probability $\Pr[n] = (\tfrac{1}{2})^{n+1}$

\IF {$n \le 2 \log_2{N}$}
	\FOR {$r=1,\ldots,n$}
		\STATE Choose $j \in [d]$ with probability
			$\w[j]^2/\norm{\w}_2^2$, \\
			and observe $\x[j]$
		\STATE
			$
			\stil_r \gets
			\norm{\w}_2^2 \, \x[j] / \w[j] - \y
			$
	\ENDFOR
\ELSE
	\FOR {$r=1,\ldots,n$}
		\STATE Choose $j_1,\ldots,j_N \in [d]$ 
			w.p.~$\w[j]^2/\norm{\w}_2^2$, (independently), 
			and observe $\x[j_1], \ldots, \x[j_N]$
		\STATE
			$
			\stil_r \gets
			\tfrac{1}{N} \sum_{s=1}^{N} \norm{\w}_2^2 \, \x[j_s] / \w[j_s] - \y
			$
	\ENDFOR
\ENDIF

\STATE
	$
	\that \gets
	2^{n+1} a_n \cdot \stil_1 \stil_2 \cdots \stil_n
	$

\end{algorithmic}
\end{algorithm}

In Procedure \ref{alg:nlest} we give an unbiased estimator of $f'(\w\tr\x-\y)$ in the LAO setting, defined in terms of the coefficients $\{a_n\}$ of $f'$.
For this estimator, we have the following (proof is omitted).


\begin{lemma} \label{lem:nlest}
The estimator $\that$ is an unbiased estimator of $f'(\w\tr\x-\y)$.
Also, assuming $\norm{\x}_2 \le 1$, $\norm{\w}_2 \le B$ and $\abs{\y} \le B$, the second-moment $\E[\that^2]$ is upper bounded by $\exp(\O(\log^2{B}))$, provided that the Taylor series of $f'(x)$ converges absolutely for $\abs{x} \le 1$.
Finally, the expected number of attributes of $\x$ used by this estimator is no more than $3$.
\end{lemma}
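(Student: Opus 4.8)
The plan is to reduce all three claims to the moments of the single-coordinate estimator $S := \norm{\w}_2^2\,\x[j]/\w[j] - \y$, where $j\in[d]$ is drawn with probability $\w[j]^2/\norm{\w}_2^2$. A direct computation gives $\E[S] = \sum_j \w[j]\x[j] - \y = \w\tr\x - \y =: u$, so each factor $\stil_r$ is an unbiased estimator of $u$. Likewise $\E[S^2] = \norm{\w}_2^2\norm{\x}_2^2 - 2\y\,\w\tr\x + \y^2 \le 4B^2$ under the stated assumptions, while $\var(S) = \norm{\w}_2^2\norm{\x}_2^2 - (\w\tr\x)^2 \le B^2$ by Cauchy--Schwarz. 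Consequently, in the second (averaged) branch, where each $\stil_r$ is an average of $N = \lceil 4B^2\rceil$ independent copies of $S$, we have $\E[\stil_r] = u$ and, crucially, $\E[\stil_r^2] = u^2 + \var(S)/N \le u^2 + \tfrac14$.

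For unbiasedness I would condition on the sampled integer $n$. Since $\stil_1,\dots,\stil_n$ are conditionally i.i.d.\ given $n$, each with conditional mean $u$ in both branches, we get $\E[\stil_1\cdots\stil_n \mid n] = u^n$. Hence $\E[\that] = \sum_{n=0}^\infty \Pr[n]\cdot 2^{n+1} a_n u^n = \sum_{n=0}^\infty a_n u^n = f'(u)$, because the sampling probability $\Pr[n] = (\tfrac12)^{n+1}$ cancels the factor $2^{n+1}$ in the definition of $\that$ exactly; the interchange of $\E$ and $\sum_n$ is justified by the absolute convergence of the Taylor series.

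The second-moment bound is the heart of the argument and is where I expect the real work. Using the same conditional independence, $\E[\that^2] = \sum_{n} 2^{n+1} a_n^2\,\sigma_n^{2n}$, where $\sigma_n^2$ is the (branch-dependent) conditional second moment of one factor. I would split the sum at the threshold $n = 2\log_2 N$. For the head ($n \le 2\log_2 N$, the single-sample branch) I use the crude bound $\sigma_n^2 = \E[S^2] \le 4B^2$, so each term is at most $(8B^2)^n a_n^2$; the largest such term, at $n \approx 2\log_2 N = \O(\log B)$, equals $(8B^2)^{\O(\log B)} = \exp(\O(\log^2 B))$, and since there are only $\O(\log B)$ terms (each with $a_n^2 \le (\sum_n\abs{a_n})^2 = \O(1)$), the whole head contributes $\exp(\O(\log^2 B))$. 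For the tail ($n > 2\log_2 N$, the averaged branch) the crude bound is useless --- it would leave $(8B^2)^n$ growing past the point where coefficient decay can absorb it --- and this is exactly where the variance reduction is indispensable: the averaged estimator satisfies $\sigma_n^2 \le u^2 + \tfrac14$, shrinking the per-factor second moment essentially down to $u^2$ and thereby keeping the tail series inside its radius of absolute convergence, so that it sums to a lower-order term. Balancing these two effects --- choosing $N = \Theta(B^2)$ so the averaging engages precisely past $n = 2\log_2 N$, where the head's exponential growth would otherwise overwhelm the decay of $a_n$ --- is the main obstacle.

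Finally, for the expected number of observed attributes I would sum over $n$: level $n$ reads $n$ attributes in the head branch and $nN$ in the tail branch, so the expectation equals $\sum_{n\le 2\log_2 N}(\tfrac12)^{n+1} n + N\sum_{n > 2\log_2 N}(\tfrac12)^{n+1} n$. The first sum is at most $\sum_{n\ge 0} n(\tfrac12)^{n+1} = 1$. For the second I would use the geometric-tail identity $\sum_{n > M} n (\tfrac12)^{n+1} = (M+2)(\tfrac12)^{M+1}$ with $M = 2\log_2 N$, so that $(\tfrac12)^{M} = N^{-2}$ and the second sum is at most $N\cdot N^{-2}\cdot\O(\log N) = \O(\log N / N) \le 2$. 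Adding the two contributions yields the claimed bound of at most $3$ attributes in expectation.
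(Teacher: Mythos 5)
Your proposal follows the paper's own route essentially step for step: the unbiasedness argument (conditional independence of the $\stil_r$'s plus the exact cancellation of $2^{n+1}$ against $\Pr[n]=2^{-(n+1)}$), the split of the second-moment sum at $\nu = 2\log_2 N$ with the crude bound $\E[\stil_r^2]\le 4B^2$ on the head, and the geometric-tail computation giving $1+\O(\log N/N)\le 3$ expected attributes are all the same as in the paper, and those parts are correct.

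The problem is the tail of the second-moment sum --- exactly the step you flag as ``the main obstacle'' --- which your sketch does not actually close. Your bound for the averaged branch, $\sigma_n^2 \le u^2+\tfrac14$ with $u=\w\tr\x-\y$, is correct, but $\abs{u}$ can be as large as $2B$, so the $n$-th tail term $2^{n+1}a_n^2\sigma_n^{2n}$ behaves like $a_n^2\,(2u^2+\tfrac12)^n$ with base up to $8B^2+\tfrac12$. Absolute convergence of $\sum_n a_n x^n$ on $\abs{x}\le 1$ gives control of $\sum_n a_n^2 x^n$ only for $\abs{x}\le 1$; it says nothing at argument $\Theta(B^2)$, so the tail is \emph{not} ``inside its radius of absolute convergence,'' and it need not even be finite: since $\E[\stil_r^2]\ge(\E[\stil_r])^2=u^2$, one has $\E[\that^2]\ge 2\sum_n a_n^2(2u^2)^n$, which diverges for, e.g., $a_n=(n+1)^{-2}$ and $\abs{u}>1/\sqrt{2}$, even though that series converges absolutely on $\abs{x}\le1$. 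The averaging over $N=\lceil 4B^2\rceil$ samples kills the variance term $\var(S)/N$ but cannot touch the squared mean $u^2$, which is what drives the growth. For what it is worth, the paper's own proof elides precisely this point by asserting $\E[\stil_r^2]\le 4B^2/N\le 1$ in the averaged branch, silently dropping the $u^2$ contribution (and then also the $2^n$ factor); your more careful accounting exposes that lacuna rather than creating a new one. But as written, the argument needs the stronger hypothesis $\sum_n a_n^2 R^n<\infty$ for $R=\Theta(B^2)$ --- which does hold for the $\erf$-derived coefficients \eqref{eq:rhocoeffs} used in AESVR, where $a_n$ decays factorially --- and without it neither your sketch nor the paper's proof establishes the claimed $\exp(\O(\log^2 B))$ bound.
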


\subsection{Approximating SVR} 
\label{sub:support_vector_regression}


\begin{algorithm}[tb]
	\caption{
		AESVR \\
		Parameters: $B, \del, \eta >0$ and accuracy $\eps > 0$
	}
	\label{alg:svrreg}

\begin{algorithmic}[1]
\INPUT
	training set $S=\{(\x_t,\y_t)\}_{t\in[m]}$ and $k>0$
\OUTPUT
	regressor $\wbar$ with $\norm{\wbar}_2 \le B$
	
\STATE Let $a_{2n} = 0$ for $n \ge 0$, and
\begin{equation} \label{eq:rhocoeffs}
	a_{2n+1} = 
	\frac{2}{\sqrt{\pi}} \cdot \frac{(-1)^n}{n! (2n+1)} \;,
	\qquad n \ge 0
\end{equation}

\STATE Execute algorithm \ref{alg:l2reg} with lines 8--9 replaced by:

$
	\x_t' \gets \x_t/\eps
$

$
	\y_t^+ \gets (\y_t+\del)/\eps, \quad
	\y_t^- \gets (\y_t-\del)/\eps
$

\vspace{0.5ex}

$
	\ttil_t \gets
	\frac{1}{2} [\mathtt{GenEst}(\w_t,\x_t',\y_t^+) 
		+ \mathtt{GenEst}(\w_t,\x_t',\y_t^-)]
$

\STATE Return the output $\wbar$ of the algorithm

\end{algorithmic}
\end{algorithm}


In order to approximate the $\del$-insensitive absolute loss function, we define 
\[
	f_\eps(x)
	= \frac{\eps}{2} \rho \left(\frac{x-\del}{\eps} \right) 
		+ \frac{\eps}{2} \rho \left(\frac{x+\del}{\eps} \right) - \del
\]
where $\rho$ is expressed in terms of the error function $\erf$,
\[
	\rho(x) = x \, \erf(x) + \frac{1}{\sqrt{\pi}} e^{-x^2} ,
\]
and consider the approximate loss functions
$
	\losst^\eps(\w) = f_\eps(\w\tr\x_t - \y_t) .
$
Indeed, we have the following.

\begin{claim} \label{cla:svr-approx}
For any $\eps>0$, $f_\eps$ is convex, analytic on the entire real line and
\[
	\sup_{x\in\R} \; \abs{f_\eps(x) - \abs{x}_\del}
	\le \eps \;.
\]
\end{claim}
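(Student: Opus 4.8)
The plan is to reduce the whole claim to one-dimensional properties of the smoothing kernel $\rho$, and then assemble $f_\eps$ from two shifted, rescaled copies of $\rho$. First I would establish the basic analytic properties of $\rho$. Differentiating $\rho(x) = x\,\erf(x) + \tfrac{1}{\sqrt{\pi}}e^{-x^2}$ and using $\erf'(x) = \tfrac{2}{\sqrt{\pi}}e^{-x^2}$, the two non-polynomial contributions cancel and I get the clean identity $\rho'(x) = \erf(x)$, hence $\rho''(x) = \tfrac{2}{\sqrt{\pi}}e^{-x^2} > 0$. This immediately yields that $\rho$ is convex, and, being built from $\erf$ and the Gaussian (both real-analytic on all of $\R$), that $\rho$ is analytic on $\R$.

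Second, I would prove the uniform bound $0 \le \rho(x) - \abs{x} \le \tfrac{1}{\sqrt{\pi}}$. Since $\rho$ is even (by the oddness of $\erf$), it suffices to treat $x \ge 0$, where the gap $g(x) := \rho(x) - x$ has derivative $g'(x) = \erf(x) - 1 \le 0$, so $g$ is non-increasing on $[0,\infty)$. Evaluating endpoints gives $g(0) = \tfrac{1}{\sqrt{\pi}}$ and $g(x) \to 0$ as $x \to \infty$ (since $\erf(x) \to 1$ and $e^{-x^2} \to 0$), pinning the range of $g$ to $[0,\tfrac{1}{\sqrt{\pi}}]$. Third, I would record the elementary but crucial algebraic identity that $f_\eps$ targets the correct limit, namely
\[
  \tfrac{1}{2}\abs{x-\del} + \tfrac{1}{2}\abs{x+\del} - \del = \abs{x}_\del ,
\]
which I would verify by splitting into $\abs{x} \le \del$ and $\abs{x} \ge \del$ (both sides are even in $x$, so only two cases are needed).

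Finally, I would combine these. Writing $u_\pm = \tfrac{x \pm \del}{\eps}$ and using $\tfrac{\eps}{2}\abs{u_\pm} = \tfrac{1}{2}\abs{x\pm\del}$, the identity lets me express the error, with the additive $-\del$ cancelling, as
\[
  f_\eps(x) - \abs{x}_\del
  = \tfrac{\eps}{2}\bigl(\rho(u_+) - \abs{u_+}\bigr)
    + \tfrac{\eps}{2}\bigl(\rho(u_-) - \abs{u_-}\bigr) .
\]
Bounding each bracket by $\tfrac{1}{\sqrt{\pi}}$ then gives $\sup_{x\in\R} \abs{f_\eps(x) - \abs{x}_\del} \le \tfrac{\eps}{\sqrt{\pi}} \le \eps$. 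Convexity and analyticity of $f_\eps$ are inherited from $\rho$, since $x \mapsto \rho(u_\pm)$ is a composition of $\rho$ with an affine map and $f_\eps$ is a nonnegative combination of two such functions plus a constant.

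The computations are all routine; the only step requiring a genuine idea is the second one, where recognizing $\rho' = \erf$ is what makes the uniform bound fall out cleanly (reducing it to the monotonicity of $g$ together with the explicit endpoint value $\tfrac{1}{\sqrt{\pi}}$). I expect no real obstacle beyond keeping the shifts and the $\eps$-rescaling straight.
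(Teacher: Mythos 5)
Your proof is correct and follows the same route the paper sketches: it hinges on the identity $\abs{x}_\del = \tfrac{1}{2}\abs{x-\del} + \tfrac{1}{2}\abs{x+\del} - \del$, which is exactly the one-line justification the paper gives, and you simply fill in the omitted details (the computation $\rho'=\erf$, the bound $0 \le \rho(x)-\abs{x} \le 1/\sqrt{\pi}$, and the inheritance of convexity and analyticity). All the steps check out, including the final bound $\eps/\sqrt{\pi} \le \eps$.
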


The claim follows easily from the identity
$
	\abs{x}_\del 
	= \tfrac{1}{2} \abs{x-\del} + \tfrac{1}{2} \abs{x+\del} - \del.
$
In addition, for using Procedure \ref{alg:nlest} we need the following simple observation, 
that follows immediately from the series expansion of $\erf(x)$.

\begin{claim} \label{cla:rho-taylor}

$
	\rho'(x) = \sum_{n=0}^{\infty} a_{2n+1} x^{2n+1} ,
$
with the coefficients $\{a_{2n+1}\}_{n \ge 0}$ given in \eqref{eq:rhocoeffs}.

\end{claim}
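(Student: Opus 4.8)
The plan is to bypass any direct manipulation of the series and instead compute $\rho'$ in closed form, where I expect a clean cancellation to reduce the derivative to the error function itself. First I would differentiate $\rho(x) = x\,\erf(x) + \frac{1}{\sqrt{\pi}} e^{-x^2}$ term by term, using the product rule on $x\,\erf(x)$ together with the fundamental theorem of calculus, which gives $\erf'(x) = \frac{2}{\sqrt{\pi}} e^{-x^2}$. The product rule then contributes $\erf(x) + \frac{2x}{\sqrt{\pi}} e^{-x^2}$, while the second summand contributes $-\frac{2x}{\sqrt{\pi}} e^{-x^2}$. The two exponential terms cancel exactly, leaving the identity $\rho'(x) = \erf(x)$.

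Having reduced the problem to expanding $\erf$, the remaining step is to invoke the standard Maclaurin series. I would start from the entire function $e^{-t^2} = \sum_{n \ge 0} \frac{(-1)^n}{n!} t^{2n}$ and integrate it term by term over $[0,x]$, using $\erf(x) = \frac{2}{\sqrt{\pi}} \int_0^x e^{-t^2}\,dt$. This yields
\[
	\erf(x) = \frac{2}{\sqrt{\pi}} \sum_{n \ge 0} \frac{(-1)^n}{n!\,(2n+1)}\, x^{2n+1} ,
\]
and the coefficient of $x^{2n+1}$ is precisely $a_{2n+1}$ as defined in \eqref{eq:rhocoeffs}. Combining this with $\rho'(x) = \erf(x)$ and matching coefficients establishes the claim.

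The computation is elementary, so I do not anticipate a genuine obstacle; the only point requiring care is the justification of the term-by-term integration, which follows immediately from the uniform convergence of the series for $e^{-t^2}$ on every compact interval (it is entire). This same observation shows that the resulting power series for $\rho'$ converges absolutely on all of $\R$, and in particular for $\abs{x} \le 1$, which is exactly the hypothesis needed in Lemma \ref{lem:nlest} to apply Procedure \ref{alg:nlest} to the coefficients $\{a_{2n+1}\}$.
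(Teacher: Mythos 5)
Your proposal is correct and matches the paper's intended argument: the paper simply asserts that the claim ``follows immediately from the series expansion of $\erf(x)$,'' which presupposes exactly your computation that the exponential terms cancel in $\rho'(x)$ to give $\rho'(x)=\erf(x)$, followed by term-by-term integration of the Maclaurin series of $e^{-t^2}$. Your added remark on absolute convergence for $\abs{x}\le 1$ is a useful (and correct) observation that the paper leaves implicit when invoking Lemma \ref{lem:nlest}.
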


We now give the main result of this section, which is a sample complexity bound for the Attribute Efficient SVR (AESVR) algorithm, given in Algorithm \ref{alg:svrreg}.

\begin{theorem} \label{thm:svr}

Assume the distribution $\D$ is such that $\norm{\x}_2 \le 1$ and $\abs{\y} \le B$ with probability $1$.
Then, for any $\w_\st \in \R^d$ with $\norm{\w_\st}_2 \le B$, we have
$
	\E\left[\eloss(\bar{\w})\right]
	\le \eloss(\w_\st) + \eps
$
where $\wbar$ is the output of AESVR (with $\eta$ properly tuned) on a training set of size
\begin{align} \label{eq:svr-m}
	m = \O \left(\frac{d}{k} \right) 
		\cdot \exp \left({\O\left( \log^2{\frac{B}{\eps}} \right)}\right)  .
\end{align}
The algorithm queries at most $k+6$ attributes of each instance in expectation.

\end{theorem}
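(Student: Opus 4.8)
The plan is to reduce the nonsmooth SVR objective to the smooth surrogate $\losst^{\eps}(\w) = f_\eps(\w\tr\x_t - \y_t)$ and then run the online-gradient-descent analysis of Theorem~\ref{thm:l2bound} on this surrogate; the only genuinely new ingredient is the variance of the gradient estimator. First I would observe that AESVR is exactly AERR applied to $\losst^{\eps}$. Differentiating $f_\eps$ gives $f_\eps'(u) = \tfrac12\rho'(\tfrac{u-\del}{\eps}) + \tfrac12\rho'(\tfrac{u+\del}{\eps})$, and by Claim~\ref{cla:rho-taylor} the coefficients $\{a_{2n+1}\}$ loaded into Procedure~\ref{alg:nlest} are precisely those of $\rho'$. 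Hence, by the unbiasedness part of Lemma~\ref{lem:nlest}, each call $\mathtt{GenEst}(\w_t,\x_t',\y_t^{\pm})$ returns an unbiased estimate of $\rho'(\w_t\tr\x_t' - \y_t^{\pm}) = \rho'(\tfrac{\w_t\tr\x_t - \y_t \mp \del}{\eps})$, so their average $\ttil_t$ unbiasedly estimates $f_\eps'(\w_t\tr\x_t - \y_t)$. Since $\ttil_t$ and $\xtil_t$ use independent randomness given the history, $\gtil_t = \ttil_t\,\xtil_t$ is an unbiased estimator of $\nabla\losst^{\eps}(\w_t) = f_\eps'(\w_t\tr\x_t - \y_t)\,\x_t$, and $f_\eps$ is convex by Claim~\ref{cla:svr-approx}.

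The crux is the second-moment bound $G^2 := \max_t \E_t[\norm{\gtil_t}_2^2]$. By conditional independence it factors as $\E_t[\ttil_t^2]\cdot\E_t[\norm{\xtil_t}_2^2]$. The second factor is the uniform-sampling variance already appearing in the Ridge analysis; a direct computation gives $\E_t[\norm{\xtil_t}_2^2] = \O(d/k)$. For the first factor I would invoke Lemma~\ref{lem:nlest}, but at the scale induced by the rescaling $\x_t' = \x_t/\eps$ and $\y_t^{\pm} = (\y_t\pm\del)/\eps$: since $\norm{\x_t'}_2 \le 1/\eps$ and $\abs{\y_t^{\pm}} \le 2B/\eps$, the effective magnitude of the instance is $\O(B/\eps)$, and the lemma yields $\E_t[\ttil_t^2] \le \exp(\O(\log^2(B/\eps)))$ (averaging two such estimators affects only constants). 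Hence $G^2 = \O(d/k)\cdot\exp(\O(\log^2(B/\eps)))$.

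With unbiasedness, convexity, and this variance bound in place, the argument of Theorem~\ref{thm:l2bound} transfers verbatim to the surrogate: applying Lemma~\ref{lem:ogdregret} to $\gtil_1,\dots,\gtil_m$, taking expectations, using convexity of $\losst^{\eps}$, and tuning $\eta = 2B/(G\sqrt{m})$ gives, after Jensen's inequality, $\E[\eloss^{\eps}(\wbar)] \le \eloss^{\eps}(\w_\st) + 2BG/\sqrt{m}$. I would then translate back through Claim~\ref{cla:svr-approx}, which gives $\abs{\eloss^{\eps}(\w) - \eloss(\w)} \le \eps$ uniformly in $\w$; setting the approximation parameter to a constant fraction of the target accuracy, it suffices to drive $2BG/\sqrt{m}$ below a constant multiple of $\eps$. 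Solving for $m$ gives $m = \O(B^2 d/(k\eps^2))\cdot\exp(\O(\log^2(B/\eps)))$, and since $B^2/\eps^2 = \exp(\O(\log(B/\eps)))$ is absorbed into the exponential factor, this is exactly the bound~\eqref{eq:svr-m}. The attribute count is then immediate: the uniform loop reads $k$ coordinates and each of the two $\mathtt{GenEst}$ calls reads at most $3$ in expectation (Lemma~\ref{lem:nlest}), for $k+6$ total.

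The main obstacle is the variance factor $\E_t[\ttil_t^2]$, i.e.\ controlling Procedure~\ref{alg:nlest} at scale $B/\eps$; this is where Lemma~\ref{lem:nlest} does the heavy lifting. A naive bound on the product estimator $2^{n+1}a_n\,\stil_1\cdots\stil_n$ would be exponential in $B/\eps$, and avoiding this relies both on the rapid decay $\abs{a_{2n+1}} = \tfrac{2}{\sqrt{\pi}\,n!(2n+1)}$ of the $\erf$ coefficients and on the $N$-sample variance reduction that the procedure switches on once $n > 2\log_2 N$. Because the cheap single-sample branch is used only for the $\O(\log(B/\eps))$ smallest powers, its contribution is $\exp(\O(\log^2(B/\eps)))$, while the factorial decay makes the averaged tail converge. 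I would treat this summation, carried out in the omitted proof of Lemma~\ref{lem:nlest}, as the one genuinely delicate estimate, with everything else a transcription of the Ridge argument.
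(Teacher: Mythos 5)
Your proposal is correct and follows essentially the same route as the paper's proof: establish unbiasedness of $\gtil_t$ for $\nabla\losst^{\eps}$ via Claim~\ref{cla:rho-taylor} and Lemma~\ref{lem:nlest}, bound the second moment by $\exp(\O(\log^2(B/\eps)))\cdot\O(d/k)$ using the rescaled instances, substitute this for $G^2$ in the Theorem~\ref{thm:l2bound} argument, and translate back through Claim~\ref{cla:svr-approx}. Your closing remarks on why the second-moment bound in Lemma~\ref{lem:nlest} holds at scale $B/\eps$ correctly identify the one delicate estimate, which the paper likewise delegates to that lemma.
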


\begin{proof}

First, note that for the approximate loss functions $\losst^\eps$ we have
\begin{align*}
	\nabla \losst^\eps(\w_t)
	= \tfrac{1}{2} \left[
		\rho'(\w_t\tr\x_t'-\y_t^+) + \rho'(\w_t\tr\x_t'-\y_t^-)
		\right] \cdot \x_t \;.
\end{align*}
Hence, Lemma \ref{lem:nlest} and Claim \ref{cla:rho-taylor} above imply that~$\gtil_t$ in Algorithm \ref{alg:svrreg} is an unbiased estimator of $\nabla \losst^\eps(\w_t)$.
Furthermore, since $\norm{\x_t'}_2 \le \tfrac{1}{\eps}$ and $\abs{\y_t^\pm} \le 2 \tfrac{B}{\eps}$, according to the same lemma we have $\E_t[\ttil_t^2] = \exp(\O(\log^2 {\tfrac{B}{\eps}}))$. Repeating the proof of Lemma~\ref{lem:l2var}, we then have
\begin{align*}
	\E_t[\norm{\gtil_t}_2^2]
	= \E_t[\ttil_t^2] \cdot \E_t[\norm{\xtil_t}_2^2]
	= \exp (\O(\log^2 {\tfrac{B}{\eps}})) \cdot \frac{d}{k} \;.
\end{align*}
Replacing $G^2$ in the proof of theorem \ref{thm:l2bound} with the above bound, we get for the output of Algorithm \ref{alg:svrreg},
\begin{align*}
	\EE{\eloss(\wbar)}
	\le \eloss(\w_\st) + \exp (\O(\log^2 {\tfrac{B}{\eps}})) \sqrt{\frac{d}{km}} ,
\end{align*}
which imply that for obtaining an $\eps$-accurate regressor $\wbar$ of the approximate problem, it is enough to take $m$ as given in \eqref{eq:svr-m}. However, claim \ref{cla:svr-approx} now gives that $\wbar$ itself is an $2\eps$-accurate regressor of the original problem, and the proof is complete.
\qedhere

\end{proof}



\section{Experiments} 
\label{sec:experiments}

In this section we give experimental evidence that support our theoretical bounds, and demonstrate the superior performance of our algorithms compared to the state of the art.
Naturally, we chose to compare our AERR and AELR algorithms
\footnote{The AESVR algorithm is presented mainly for theoretical considerations, and was not implemented in the experiments.}
with the AER algorithm of \cite{cesa2010efficient}. We note that AER is in fact a hybrid algorithm that combines 1-norm and 2-norm regularizations, thus we use it for benchmarking in both the Ridge and Lasso settings.

We essentially repeated the experiments of \cite{cesa2010efficient} and used the popular MNIST digit recognition dataset \cite{lecun1998gradient}. Each instance in this dataset is a $28 \times 28$ image of a handwritten digit $0-9$. We focused on the ``3 vs.~5'' task, on a subset of the dataset that consists of the ``3'' digits (labeled $-1$) and the ``5'' digits (labeled $+1$). We applied the regression algorithms to this task by regressing to the labels.

In all our experiments, we randomly split the data to training and test sets, and used 10-fold cross-validation for tuning the parameters of each algorithm. Then, we ran each algorithm on increasingly longer prefixes of the dataset and tracked the obtained squared-error on the test set. For faithfully comparing partial- and full-information algorithms, we also recorded the total \emph{number of attributes} used by each algorithm.

In our first experiment, we executed AELR, AER and (offline) Lasso on the ``3 vs.~5'' task. We allowed both AELR and AER to use only $k=4$ pixels of each training image, while giving Lasso unrestricted access to the entire set of attributes (total of 784) of each instance. The results, averaged over 10 runs on random train/test splits,  are presented in Figure~\ref{fig:lasso}. Note that the $x$-axis represents the cumulative number of attributes used for training. The graph ends at roughly 48500 attributes, which is the total number of attributes allowed for the partial-information algorithms. Lasso, however, completes this budget after seeing merely 62 examples. 

As we see from the results, AELR keeps its test error significantly lower than that of AER along the entire execution, almost bridging the gap with the full-information Lasso. Note that the latter has the clear advantage of being an offline algorithm, while both AELR and AER are online in nature. Indeed, when we compared AELR with an \emph{online} Lasso solver, our algorithm obtained test error almost 10 times better.

In the second experiment, we evaluated AERR, AER and Ridge regression on the same task, but now allowing the partial-information algorithms to use as much as $k=56$ pixels (which amounts to 2 rows) of each instance. The results of this experiment are given in Figure~\ref{fig:ridge}. We see that even if we allow the algorithms to view a considerable number of attributes, the gap between AERR and AER is large.

\pgfplotsset{
	error bars/error bar style={thick, solid},
	every axis/.append style={ultra thick},
	try min ticks={5},
	tick label style={font=\footnotesize},
	label style={font=\footnotesize},
	legend style={font=\footnotesize}
}
\newlength\figurewidth
\newlength\figureheight
\setlength\figurewidth{0.8\columnwidth}
\setlength\figureheight{0.7\figurewidth}

\begin{figure}[tb]
\begin{center}
%
%
%
%
\begin{tikzpicture}

\begin{axis}[%
width=\figurewidth,
height=\figureheight,
scale only axis,
xmin=4280.64, xmax=47463.36,
xlabel={Number of attributes},
ymin=0.404046992871272, ymax=0.965534244271264,
ylabel={Test squared error},
axis lines=left,
axis on top,
legend entries={AELR,AER,Offline},
legend style={nodes=right}]
\addplot [
color=green!50!black,
solid
]
plot [error bars/.cd, y dir = both, y explicit]
coordinates{
 (4704,0.795603720392433)+-(0.0,0.0688195726502734)(9408,0.71485803776341)+-(0.0,0.0496116195373445)(14112,0.681853822801169)+-(0.0,0.0435442504916047)(18816,0.639661727266415)+-(0.0,0.0396309275232841)(23520,0.624456111030838)+-(0.0,0.0470841074400595)(28224,0.599202228753313)+-(0.0,0.047452806706332)(32928,0.608903348720876)+-(0.0,0.0467905010492932)(37632,0.581416811821655)+-(0.0,0.034845708425288)(42336,0.571884915534481)+-(0.0,0.0373306992563666)(47040,0.546725149760739)+-(0.0,0.0300966239896273) 
};

\addplot [
color=red,
dashed
]
plot [error bars/.cd, y dir = both, y explicit]
coordinates{
 (4704,0.868091816143158)+-(0.0,0.0974424281281058)(9408,0.81292588749111)+-(0.0,0.0630775339370052)(14112,0.774449007274695)+-(0.0,0.0584460264704767)(18816,0.768350261782315)+-(0.0,0.0645625633724904)(23520,0.744503598160567)+-(0.0,0.0402051305656428)(28224,0.72124289141191)+-(0.0,0.0431702478426694)(32928,0.717875424084904)+-(0.0,0.0467218743193272)(37632,0.696870050431572)+-(0.0,0.0351735764324453)(42336,0.687579925150294)+-(0.0,0.051029522459949)(47040,0.690987443845537)+-(0.0,0.0549521757923697) 
};

\addplot [
color=blue,
dotted
]
plot [error bars/.cd, y dir = both, y explicit]
coordinates{
 (4704,0.836077573305386)+-(0.0,0.113405998085888)(9408,0.60987865635816)+-(0.0,0.0904555757765982)(14112,0.539113710933935)+-(0.0,0.0777526471163386)(18816,0.506421176252506)+-(0.0,0.0469383061717956)(23520,0.495258624955457)+-(0.0,0.0595237075279841)(28224,0.475098034605834)+-(0.0,0.0658349373279696)(32928,0.461218217180494)+-(0.0,0.044729568563438)(37632,0.455268090065762)+-(0.0,0.05122109719449)(42336,0.450066112375436)+-(0.0,0.0376637002807405)(47040,0.45133097485368)+-(0.0,0.0413831290384844) 
};

\end{axis}
\end{tikzpicture}
\caption{Test squared error of Lasso algorithms with $k=4$, over increasing prefixes of the ``3 vs.~5'' dataset.}
\label{fig:lasso}
\end{center}
\end{figure}
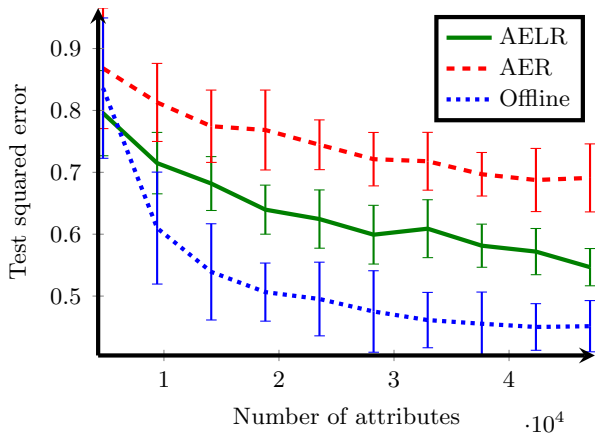

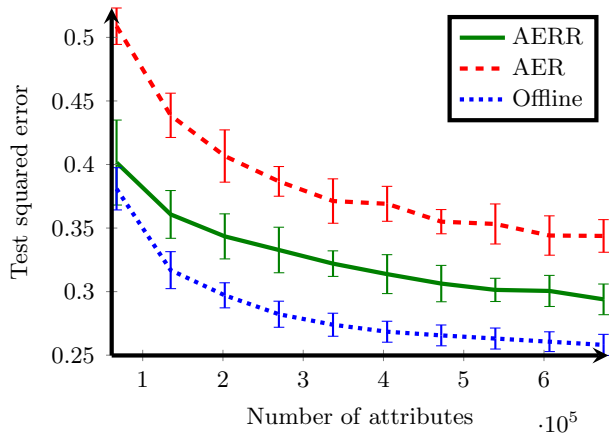
\begin{figure}[tb]
\begin{center}
%
%
%
%
\begin{tikzpicture}

\begin{axis}[%
width=\figurewidth,
height=\figureheight,
scale only axis,
xmin=61355.84, xmax=680308.16,
xlabel={Number of attributes},
ymin=0.249782261492429, ymax=0.523122776734702,
ylabel={Test squared error},
axis lines=left,
axis on top,
legend entries={AERR,AER,Offline},
legend style={nodes=right}]
\addplot [
color=green!50!black,
solid
]
plot [error bars/.cd, y dir = both, y explicit]
coordinates{
 (67424,0.401538737938388)+-(0.0,0.0334280459506332)(134848,0.360732671239329)+-(0.0,0.018787771150739)(202272,0.343458960801848)+-(0.0,0.0177229622556076)(269696,0.332752128949618)+-(0.0,0.0179097208511357)(337120,0.322018411413624)+-(0.0,0.0100565974351034)(404544,0.313765020694926)+-(0.0,0.0153051420119066)(471968,0.306328109372373)+-(0.0,0.0142996737474986)(539392,0.301360520976859)+-(0.0,0.00913335775883484)(606816,0.300531171849021)+-(0.0,0.0122305070989906)(674240,0.293913207584297)+-(0.0,0.0120665456188664) 
};

\addplot [
color=red,
dashed
]
plot [error bars/.cd, y dir = both, y explicit]
coordinates{
 (67424,0.508795342781171)+-(0.0,0.0143274339535313)(134848,0.438679261550782)+-(0.0,0.0174236150929475)(202272,0.406670779206117)+-(0.0,0.0205589143287334)(269696,0.386725570224659)+-(0.0,0.0116811490945351)(337120,0.371250307503383)+-(0.0,0.0174665531466884)(404544,0.369074097308667)+-(0.0,0.0137462105774966)(471968,0.355028801020246)+-(0.0,0.00952911688198654)(539392,0.353217793840303)+-(0.0,0.0157479839576846)(606816,0.344132652086442)+-(0.0,0.0154815200478138)(674240,0.343803181601666)+-(0.0,0.0128359523926403) 
};

\addplot [
color=blue,
dotted
]
plot [error bars/.cd, y dir = both, y explicit]
coordinates{
 (67424,0.38087422584776)+-(0.0,0.0166128489290235)(134848,0.316891738716516)+-(0.0,0.0145407765760314)(202272,0.297090342060935)+-(0.0,0.00986123533097882)(269696,0.282212747206717)+-(0.0,0.0102567217346834)(337120,0.273947695809949)+-(0.0,0.00905984108838453)(404544,0.268453454465837)+-(0.0,0.00823885104211114)(471968,0.265561964887735)+-(0.0,0.00819086544226816)(539392,0.263091659976207)+-(0.0,0.00825895631427925)(606816,0.26061196960304)+-(0.0,0.00777461330917278)(674240,0.258082429417383)+-(0.0,0.00830016792495392) 
};

\end{axis}
\end{tikzpicture}
\caption{Test squared error of Ridge algorithms with $k=56$, over increasing prefixes of the ``3 vs.~5'' dataset.}
\vspace{-2ex}
\label{fig:ridge}
\end{center}
\end{figure}


\section{Conclusions and Open Questions} 
\label{sec:summary_and_conclusions}

We have considered the fundamental problem of statistical regression analysis, and in particular Lasso and Ridge regression, in a setting where the observation upon each training instance is limited to a few attributes, and gave algorithms that improve over the state of the art by a leading order term with respect to the sample complexity. This resolves an open question of \cite{cesa2010efficient}. The algorithms are efficient, and give a clear experimental advantage in previously-considered benchmarks. 

For the challenging case of regression with general convex loss functions, we describe exponential improvement in sample complexity, which apply in particular to support-vector regression. 

It is interesting to resolve the sample complexity gap of $\frac{1}{\eps}$ which still remains for Lasso regression, and to improve upon the pseudo-polynomial factor in $\eps$ for support-vector regression. In addition, establishing analogous bounds for our algorithms that hold with high probability (other than in expectation) appears to be non-trivial, and is left for future work.

Another possible direction for future research is adapting our results to the setting of learning with (randomly) missing data, that was recently investigated---see e.g.~\cite{rostamizadehlearning,loh2011high}. The sample complexity bounds our algorithms obtain in this setting are slightly worse than those presented in the current paper, and it is interesting to check if one can do better.


\section*{Acknowledgments}

We thank Shai Shalev-Shwartz for several useful discussions, and the anonymous referees for their detailed comments.

\bibliographystyle{icml2012}
{
\bibliography{aeregression}
}

\ifdefined\isconf
\else

\appendix

\section{Proofs} 
\label{sec:proofs}

\subsection{Proof of Lemma \ref{lem:l2var}}
\label{sub:proof_of_lemma_lem:l2var}

It is straightforward to verify that $\E_t[\xtil_{t,r}] = \x_t $ for all $r$, thus also $\E_t[\xtil_t] = \x_t $.
In addition, we have
\begin{align*}
	\E_t[\ttil_t]
	= \sumj \frac{\w_t[j]^2}{\norm{\w_t}_2^2} 
			\cdot \norm{\w_t}_2^2 \frac{\x_t[j]}{\w_t[j]} - \y_t
	= \w_t \tr \x_t - \y_t \;.
\end{align*}
Hence, and since $\xtil_t$ and $\ttil_t$ are independent, we obtain that $\E_t[\gtil_t]$ equals $(\w_t \tr \x_t - y_t) \cdot \x_t$, which is exactly the gradient $\g_t = \nabla \losst(\w_t)$.

Let us turn to bound $\E_t[\norm{\gtil_t}_2^2]$. 
It is easy to verify that $\E_t[\norm{\xtil_{t,r}}_2^2] = d \norm{\x_t}_2^2$ for all $r$, and since $\E_t[\xtil_{t,r}] = \x_t$ and the $\xtil_{t,r}$'s are independent, we have
\begin{align} \label{eq:varx}
	\E_t[\norm{\xtil_t}_2^2]
	&= \frac{1}{k^2} \sum_{r} \E_t[\norm{\xtil_{t,r}}_2^2]
		+ \frac{1}{k^2} \sum_{r \ne s} \norm{\x_t}_2^2 \nonumber \\
	&= \frac{2d+k-1}{2k} \norm{\x_t}_2^2 \;.
\end{align}
This gives the bound $\E_t[\norm{\xtil_t}_2^2] \le 2 d/k$. On the other hand, recalling $\abs{\y_t} \le B$ and using the inequality $(a-b)^2 \le 2(a^2+b^2)$, we obtain
\begin{align} \label{eq:vart}
	\E_t[\ttil_t^2]
	\le 2(\norm{\w_t}_2^2 \norm{\x_t}_2^2 + \y_t^2)
	\le 4 B^2 \;.
\end{align}
Finally, from \eqref{eq:varx}, \eqref{eq:vart} and via independence we have $\E_t[\norm{\gtil_t}_2^2] \le 8 B^2 d/k$, and the lemma follows.
\qedhere


\subsection{Proof of Lemma~\ref{lem:mwregret}} 
\label{sub:proof_of_lemma_lem:mwregret}

The lemma is a direct consequence of the following second-order bound for the MW algorithm, which is essentially a simplified version of Lemma II.3 of \cite{chw}.

\begin{lemma} \label{lem:mwregret-gen}

Let $\eta > 0$, and let $\c_1,\ldots,\c_T$ be an arbitrary sequence of vectors in $\R^n$ with $\c_t[i] \ge -1/\eta$ for all $t$ and $i$. Define a sequence $\z_1,\ldots,\z_T$ by letting $\z_1 \gets \mathbf{1}_n$ and for $t \ge 1$,
\begin{align*}
	\z_{t+1}[i] \gets \z_t[i] \cdot \exp(-\eta \c_t[i]), \qquad i=1,\ldots,n .
\end{align*}
Then, for the vectors $\p_t := \z_t / \norm{\z_t}_1$ we have
\begin{align*}
	\sumt \p_t\tr\c_t 
	\le \mini \sumt \c_t[i] 
		+ \frac{\log n}{\eta} + \eta \sumt \p_t\tr\c_t^{2} .
\end{align*}

\end{lemma}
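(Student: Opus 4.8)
The plan is to run the standard potential-based analysis for multiplicative weights, tracking the total weight $\Phi_t := \norm{\z_t}_1 = \sum_{i} \z_t[i]$ across iterations. First I would note that $\p_t[i] = \z_t[i]/\Phi_t$, so the one-step multiplicative growth of the potential can be written as
\[
	\frac{\Phi_{t+1}}{\Phi_t}
	= \sum_{i} \p_t[i] \, \exp(-\eta \, \c_t[i]) .
\]
The goal is to bound this ratio from above by an expression of the form $\exp(-\eta\,\p_t\tr\c_t + \eta^2\,\p_t\tr\c_t^2)$, so that the per-step factors telescope into a clean product over $t$.

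Next I would apply the elementary inequality $e^{-x} \le 1 - x + x^2$, which holds for all $x \ge -1$. This is exactly the step where the hypothesis $\c_t[i] \ge -1/\eta$ is needed: it guarantees $x = \eta\,\c_t[i] \ge -1$ in every coordinate, so the inequality is applicable term by term. Substituting and using $\sum_i \p_t[i] = 1$ gives
\[
	\frac{\Phi_{t+1}}{\Phi_t}
	\le 1 - \eta\,\p_t\tr\c_t + \eta^2\,\p_t\tr\c_t^2
	\le \exp\!\left(-\eta\,\p_t\tr\c_t + \eta^2\,\p_t\tr\c_t^2\right) ,
\]
where the last step uses $1 + z \le e^z$. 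Multiplying these bounds over $t = 1,\ldots,T$ and recalling $\Phi_1 = n$ then yields $\Phi_{T+1} \le n \cdot \exp(-\eta\sumt\p_t\tr\c_t + \eta^2\sumt\p_t\tr\c_t^2)$.

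Finally I would lower bound the terminal potential by a single coordinate: for any fixed $i$, $\Phi_{T+1} \ge \z_{T+1}[i] = \exp(-\eta\sumt\c_t[i])$, since $\z_1[i]=1$. Combining the upper and lower bounds on $\Phi_{T+1}$, taking logarithms, dividing by $\eta$, and rearranging gives
\[
	\sumt \p_t\tr\c_t
	\le \sumt \c_t[i] + \frac{\log n}{\eta} + \eta\sumt\p_t\tr\c_t^2 ,
\]
and since the right-hand side depends on $i$ only through the first term, minimizing over $i$ yields the claim. There is no serious obstacle beyond correctly invoking the quadratic bound $e^{-x}\le 1-x+x^2$ under the sign constraint on the losses --- which is the one place the condition $\c_t[i]\ge -1/\eta$ is essential --- while the remainder is routine bookkeeping with the telescoping product and the single-coordinate lower bound.
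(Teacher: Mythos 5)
Your proof is correct and follows essentially the same route as the paper's: the same potential $\norm{\z_t}_1$, the same quadratic bound $e^{-x}\le 1-x+x^2$ for $x\ge -1$ (the paper writes it as $e^{x}\le 1+x+x^2$ for $x\le 1$), the same use of $1+z\le e^z$ to telescope, and the same single-coordinate lower bound on the terminal potential. No gaps.
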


To see how the lemma follows from the above bound, note that we can write the update rule of Algorithm~\ref{alg:l1reg}, in terms of the augmented vectors $\z_t$ and $\gbar_t'$, as follows:
\[
	\z_{t+1}[i] = \z_t[i] \cdot \exp(-\eta \gbar'_t[i])
	\;,\qquad i=1,2,\ldots,2d .
\]
That is, $\z_{t+1}$ is obtained from $\z_t$ by a multiplicative update based on the vector $\gbar'_t$.
Noticing that
$
	\norm{\gbar'_t}_\infty = \norm{\gbar_t}_\infty \le 1/\eta ,
$
we see from Lemma~\ref{lem:mwregret-gen} that for any $i^\st$,
\begin{align*}
	\summ \p_t \tr \gbar'_t
	\le \summ \gbar'_t[i^\st] 
		+ \frac{\log 2d}{\eta} + \eta \summ \p_t \tr (\gbar'_t)^2
\end{align*}
where $\p_t := \z_t / \norm{\z_t}_1$, which gives the Lemma.

For completeness, we provide a proof of Lemma \ref{lem:mwregret-gen}.

\begin{proof}[Proof (of Lemma \ref{lem:mwregret-gen})]

Using the fact that $e^x \le 1+x+x^2$ for $x \le 1$, we have
\begin{align*}
	\norm{\z_{t+1}}_1 
		& = \sum_{i=1}^n \z_t[i] \cdot e^{-\eta \c_t[i]} \\
		& \le \sum_{i=1}^n \z_t[i] \cdot 
			(1 - \eta \c_t[i] + \eta^2 \c_t[i]^2) \\
		 & = \norm{\z_t}_1 \cdot 
			(1 - \eta\p_t \tr \c_t + \eta^2 \p_t \tr \c_t^2)
\end{align*}
and since $e^z \ge 1+z$ for $z \in \R$, this implies by induction that
\begin{align} \label{eq:mwup}
	\log \norm{\z_{T+1}}_1 
	& = \log n + \sumt
	\log(1 - \eta \p_t \tr \c_t + \eta^2 \p_t \tr \c_t^2) \nonumber \\
	& \le \log n - \eta \sumt \p_t \tr \c_t 
		+ \eta^2 \sumt \p_t \tr \c_t^2 \;.
\end{align}
On the other hand, we have
\begin{align} \label{eq:mwlow}
	\log \norm{\z_{T+1}}_1
	&= \log \; \sum_{i=1}^{n} \prodt e^{-\eta \c_t[i]} \nonumber \\
	&\ge \log \; \prodt e^{-\eta\c_t[i^\st]} \nonumber \\
	&= -\eta \sumt \c_t[i^\st] \;. 
\end{align}
Combining~\eqref{eq:mwup} and~\eqref{eq:mwlow} and rearranging, we obtain
\[
	\sumt \p_t\tr\c_t 
	\le \sumt \c_t[i^\st] 
		+ \frac{\log n}{\eta} + \eta \sumt \p_t\tr\c_t^{2}
\]
for any $i^\st$, which completes the proof.
\qedhere

\end{proof}


\subsection{Proof of Lemma \ref{lem:l1lem1}}
\label{proof_of_lemma_lem:l1lem1}

Notice that by our notations,
\[
	\summ \p_t \tr \gbar'_t 
	= \summ \frac{(\z^+_t,\z^-_t)\tr(\gbar_t,-\gbar_t)}
			{\norm{\z^+_t}_1 + \norm{\z^-_t}_1} 
	= \frac{1}{B} \summ \w_t \tr \gbar_t \;,
\]
and
\[
	\min_i \summ \gbar'_t[i]
	= \min_{\norm{\w}_1 \le B} \frac{1}{B} \summ \w \tr \gbar_t 
	\le \frac{1}{B} \summ \w_\st \tr \gbar_t
\]
for any $\w_\st$ with $\norm{\w_\st}_1 \le B$.
Plugging into the bound of Lemma \ref{lem:mwregret}, we get
\[
	\summ \gbar_t \tr (\w_t - \w_\st)
	\le B \left(
		\frac{\log 2d}{\eta} + \eta \summ \p_t \tr (\gbar'_t)^2
		\right) \;.
\]
Finally, taking the expectation with respect to the randomization of the algorithm, and noticing that $\norm{\E_t[(\gbar'_t)^2]}_\infty \le \norm{\E_t[\gtil_t^2]}_\infty \le G^2$, the proof is complete.
\qedhere

\subsection{Proof of lemma~\ref{lem:l1lem2}} 
\label{sub:l1proofs}

For the proof we need a simple lemma, that allows us to bound the deviation of the expected value of a clipped random variable from that of the original variable, in terms of its variance.

\begin{lemma} \label{lem:clip}
Let $X$ be a random variable with $ \abs{\E[X]} \leq C/2 $ for some $C>0$. Then for the clipped variable $\bar{X} := \clip(X,C) = \max\{\min\{X,C\},-C\}$ we have
\[
	\abs{\E[\bar{X}] - \E[X]} 
		\le 2 \frac{\var[X]}{C} .
\]
\end{lemma}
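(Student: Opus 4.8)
The plan is to observe that the clip operation modifies $X$ only on the event $\{\abs{X} > C\}$, and that this event forces a large deviation of $X$ from its mean; this lets me convert a first-moment estimate of the clipping error into a second-moment (variance) bound. Write $\mu := \E[X]$, so by hypothesis $\abs{\mu} \le C/2$. First I would express the error as $\E[\bar{X}] - \E[X] = \E[\bar{X} - X]$ and note that $\bar{X} - X$ vanishes whenever $\abs{X} \le C$, so that
\[
	\abs{\E[\bar{X}] - \E[X]}
	\le \E\!\left[ \abs{\bar{X} - X} \cdot \mathbf{1}_{\{\abs{X} > C\}} \right] .
\]

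The key step is a pointwise bound on the clipping error restricted to the clipping event. On $\{X > C\}$ we have $\bar{X} - X = C - X$, while on $\{X < -C\}$ we have $\bar{X} - X = -C - X$; in both cases, using $\abs{\mu} \le C/2 \le C$, a direct check gives $\abs{\bar{X} - X} \le \abs{X - \mu}$. Moreover, on the same event the assumption $\abs{\mu} \le C/2$ guarantees $\abs{X - \mu} > C/2$: indeed, if $X > C$ then $X - \mu > C - \tfrac{C}{2} = \tfrac{C}{2}$, and the case $X < -C$ is symmetric. Consequently $\tfrac{2\abs{X-\mu}}{C} > 1$ there, and I can insert this trivial factor to obtain
\[
	\abs{\bar{X} - X}
	\le \abs{X - \mu}
	\le \abs{X - \mu} \cdot \frac{2\abs{X-\mu}}{C}
	= \frac{2(X-\mu)^2}{C}
\]
everywhere on the clipping event.

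Finally I would integrate this pointwise bound and drop the indicator to pass to the full second moment:
\[
	\abs{\E[\bar{X}] - \E[X]}
	\le \frac{2}{C}\, \E\!\left[ (X-\mu)^2 \, \mathbf{1}_{\{\abs{X}>C\}} \right]
	\le \frac{2}{C}\, \E[(X-\mu)^2]
	= 2\,\frac{\var[X]}{C} ,
\]
which is exactly the claim. The only delicate point is verifying the two pointwise inequalities on the clipping event, where the hypothesis $\abs{\E[X]} \le C/2$ is used twice --- once to keep $\abs{\bar{X} - X} \le \abs{X - \mu}$, and once to ensure the deviation $\abs{X-\mu}$ exceeds $C/2$ so that the factor $2\abs{X-\mu}/C \ge 1$ may legitimately be inserted. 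Everything else is routine.
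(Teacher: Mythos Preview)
Your proof is correct and follows essentially the same approach as the paper: both arguments reduce to the pointwise estimate $\abs{\bar{X}-X} \le \tfrac{2}{C}(X-\mu)^2$ on the clipping event, obtained from the two facts $\abs{X-\mu} > C/2$ and $\abs{\bar{X}-X} \le \abs{X-\mu}$ (equivalently $\mu \le C$) that the hypothesis $\abs{\mu}\le C/2$ supplies. The only cosmetic difference is that the paper bounds $\E[X]-\E[\bar{X}]$ from above and below separately by dropping the sign-definite tail, whereas you handle both tails symmetrically through absolute values.
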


Now, notice that $\norm{\E_t[\gtil_t^2]}_\infty \le G^2$ implies $\norm{\E_t[\gtil_t]}_\infty \le G$, as
\begin{align*}
	\norm{\E_t[\gtil_t]}_\infty^2
	= \norm{\E_t[\gtil_t]^2}_\infty
	\le \norm{\E_t[\gtil_t^2]}_\infty .
\end{align*}
Since $\gbar[i] = \clip(\gtil[i],\,1/\eta)$ and $\abs{\EE[t]{\gtil_t[i]}} \le G \le 1/2\eta$, the above lemma implies that
\[
	\abs{\EE[t]{\gbar_t[i]} - \EE[t]{\gtil_t[i]}}
	\le 2 \eta \EE[t]{\gtil_t[i]^2}
	\le 2 \eta G^2
\]
for all $i$, which means that $\norm{\EE[t]{\gbar_t - \gtil_t}}_\infty \le 2 \eta G^2$. Together with $\norm{\w_t - \w_\st}_1 \le 2B$, this yields
\[
	\EE[t]{(\gtil_t - \gbar_t) \tr (\w_t - \w_\st)} \le 4 \eta B G^2.
\]
Summing over $t=1,\ldots,m$ and taking the expectation, we obtain the lemma. \qedhere

Finally, we prove the simple lemma.

\begin{proof}[Proof (of lemma \ref{lem:clip})]
As a first step, note that for $x>C$ we have $x-\E[X] \ge C/2$, so that
\[
	C(x-C) \le 2(x-\E[X])(x-C) \le 2(x-\E[X])^2 .
\]
Hence, denoting by $\mu$ the probability measure of $X$, we obtain
\begin{align*}
	\E[X] - \E[\bar{X}]
       & =   \int_{x<-C} (x+C) d\mu + \int_{x>C} (x-C) d\mu
    \\ & \le \int_{x>C} (x-C) d\mu
    \\ & \le \frac{2}{C} \int_{x>C} (x-\E[X])^2 d\mu
    \\ & \le \frac{2}{C} \var[X] .
\end{align*}
Similarly one can prove that $\E[X] - \E[\bar{X}] \ge -2\var[X]/C$, and the result follows.
\end{proof}

\subsection{Proof of Lemma~\ref{lem:l1var}} 
\label{sub:proof_of_lemma_lem:l1var}

Since $\E_t[\xtil_{t,r}] = \x_t$ and $\E_t[\xtil_{t,r}^2] = d \x_t^2$ for all $r$, we have (by independence) that
\begin{align*}
	\E_t[\xtil_t^2]
	= \frac{1}{k^2} \sum_{r=1}^k \E_t[\xtil_{t,r}^2]
		+ \frac{1}{k^2} \sum_{r \ne s} \x_t^2 
	= \frac{d + k-1}{k} \x_t
\end{align*}
so evidently $\norm{\E_t[\xtil_t^2]}_\infty \le 2 d/k$.
In addition,
\begin{align*}
	\E_t[\ytil_t^2]
	= \norm{\w_t}_1 \sumj \abs{\w_t[j]} \, \x_t[j]^2
	\le \norm{\w_t}_1^2 \norm{\w_t^2}_\infty
	\le B^2 .
\end{align*}
From here we proceed exactly as in Lemma~\ref{lem:l2var}.


\subsection{Proof of Lemma \ref{lem:nlest}}

First, denote $\s := \w\tr\x-\y$ and notice that, in any case, we have $\E[\stil_r] = \s$ (see the proof of Lemma \ref{lem:l2var}). Hence, and since $\stil_1,\ldots,\stil_n$ are independent,
\begin{align*}
	\E[\that]
	&= \sum_{n=0}^{\infty} \frac{1}{2^{n+1}}
		\E[ 2^{n+1} a_n \cdot \stil_1 \stil_2 \cdots \stil_n ]
	= \sum_{n=0}^{\infty} a_n \s^n \\
	&= f(\s) \;,
\end{align*}
thus $\that$ is an unbiased estimator of $f'(\s)$. 

For bounding the second moment, let $\nu := 2 \log_2{N}$ and note that if $n \le \nu$ then for all $r$, $\E[\stil_r^2] \le 4 B^2$, and otherwise $\E[\stil_r^2] \le 4B^2 / N \le 1$. 
Also, denoting $f_+'(x) = \sum_{n=0}^\infty \abs{a_n} x^n$ which exists for $\abs{x} \le 1$ by our assumption, we have
$
	\sum_{n=0}^\infty a_n^2 \le (\sum_{n=0}^\infty \abs{a_n})^2 = (f_+'(1))^2 .
$
This yields
\begin{align*}
	\E[\t^2]
	&= 2 \sum_{n=0}^{\infty}
		a_n^2 \E[ 2^{n} \cdot \stil_1^2 \stil_2^2 \cdots \stil_n^2 ] \\
	&\le 2\sum_{n \le \nu} a_n^2 (8 B^2)^n
		+ 2\sum_{n > \nu} a_n^2 \\
	&\le 2 (8 B^2)^{\nu} \sum_{n=0}^\infty a_n^2 
		+ 2 \sum_{n=0}^\infty a_n^2  \\
	&\le (4B)^{2 \nu} \cdot 2(f_+'(1))^2
	= \exp(\O(\log^2 {B}))
\end{align*}

Finally, the expected number of attributes used by the estimator is bounded as follows:
\begin{align*}
	\E[n]
	&= \sum_{n \le \nu} \frac{n}{2^{n+1}}
		+ \sum_{n > \nu} \frac{n N}{2^{n+1}}
	\le \sum_{n=0}^{\infty} \frac{n}{2^{n+1}} 
		+ N \sum_{n \ge \nu} \frac{n}{2^{n+1}} \\
	&\le 1 + N \frac{2 \log_2{N}+1}{N^2} \le 3
\end{align*}
where we have used the identity $\sum_{n=r}^\infty n/2^{n+1} = 2^{-r}(r+1)$.

\section{Lower bounds}

In this section we prove:
\begin{theorem} \label{thm:lb}
Let $\eps = \Omega(1/\sqrt{d})$. Any algorithm for LAO Ridge regression requires to observe at least $\Omega(\frac{d}{ \eps^2})$ coordinates in order to obtain an $O(\eps)$-approximate solution. 
\end{theorem}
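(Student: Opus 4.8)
The plan is to exhibit a single family of hard distributions indexed by a hidden sign vector $\sigma\in\{\pm1\}^d$, argue that any $\O(\eps)$-accurate learner must essentially recover $\sigma$, and then show information-theoretically that each observed attribute conveys only $\O(\eps^2)$ bits about $\sigma$. Concretely, I would fix a target correlation $\rho=\Theta(\eps)$ (legitimate as a correlation since $\eps=\O(1)$, with $\rho=\Omega(1/\sqrt d)$ guaranteed by the hypothesis $\eps=\Omega(1/\sqrt d)$). Draw the label $\y\in\{\pm1\}$ uniformly and, conditionally on $\y$ and $\sigma$, let the attributes be independent with $\x[j]=\xi_j/\sqrt d$, where $\xi_j\in\{\pm1\}$ and $\E[\xi_j\mid\y,\sigma]=\rho\,\sigma_j\,\y$. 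Then $\norm{\x}_2=1$ deterministically and $\abs{\y}=1$, as required. A direct computation of $\eloss$ gives $\w_\st=\sigma/\sqrt d$ (so $\norm{\w_\st}_2=1$), and shows that for any $\w$ with $\norm{\w}_2\le1$ the excess loss equals $\Theta(\rho)\,(1-u)+\O(1/d)$, where $u:=\langle\w,\sigma\rangle/\sqrt d\in[-1,1]$ is the normalized alignment of $\w$ with the hidden $\sigma$. The role of $\eps=\Omega(1/\sqrt d)$, equivalently $\rho^2=\Omega(1/d)$, is precisely to make the genuine signal term $\Theta(\rho)(1-u)$ dominate the $\O(1/d)$ finite-dimension corrections.

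Taking the constant in $\rho=\Theta(\eps)$ large enough, the excess-loss identity forces any $\O(\eps)$-accurate output to satisfy $u\ge\tfrac34$, i.e. $\langle\w,\sigma\rangle\ge\tfrac34\sqrt d$. Since $\norm{\w}_2\le1$, this means $\w$ is strongly correlated with $\sigma$; in particular $\hat\sigma:=\sgn(\w)$ agrees with $\sigma$ on at least a $\tfrac78$-fraction of coordinates. Thus it suffices to lower bound the number of observed attributes needed to recover a constant fraction of the $d$ independent uniform bits $\sigma_1,\dots,\sigma_d$.

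The key estimate is that a single observation of a coordinate $\x[j]$ together with the label $\y$ reveals only the product $\xi_j\y$, which equals $\sigma_j$ with probability $(1+\rho)/2$ and is independent of every other coordinate of $\sigma$; hence it carries at most $1-H(\tfrac{1-\rho}{2})=\O(\rho^2)=\O(\eps^2)$ bits about $\sigma$. Enumerating the learner's observations and applying the chain rule for mutual information, the total information about $\sigma$ contained in $T$ observed attributes is at most $T\cdot\O(\eps^2)$. On the other hand, recovering a $\tfrac78$-fraction of $d$ independent fair bits requires $\Omega(d)$ bits of information, by Fano's inequality. Combining the two estimates yields $T\cdot\O(\eps^2)\ge\Omega(d)$, i.e. $T=\Omega(d/\eps^2)$ observed coordinates, which is the claim.

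I expect the main obstacle to be the information accounting under adaptivity: one must ensure that each single observation contributes only $\O(\eps^2)$ bits even though the learner may choose which coordinate to inspect based on its entire past and internal randomness. I would handle this through the chain rule $I(\mathrm{obs};\sigma)=\sum_t I(\mathrm{obs}_t;\sigma\mid\mathrm{past})$, bounding every conditional term by the capacity $\O(\rho^2)$ of the binary-symmetric channel $\sigma_{j_t}\mapsto\xi_{j_t}\y_t$, using that conditioned on the past the queried index $j_t$ is determined and the fresh label/attribute noise is independent of $\sigma$. A secondary technical point is the careful bookkeeping of the $\O(1/d)$ correction terms in the excess-loss identity, which is exactly where the regime hypothesis $\eps=\Omega(1/\sqrt d)$ is needed.
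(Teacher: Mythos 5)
Your proposal is correct in outline but follows a genuinely different route from the paper. The paper plants a \emph{sparse} signal: a random set $T$ of $k=1/\eps^2$ coordinates, example vectors that are signed standard basis vectors $r_i\e_i$, and labels identically $1$; Lemmas \ref{lem:shalom1} and \ref{lem:shalom2} show that any $\O(\eps)$-approximate regressor must localize and sign-recover $\Omega(k)$ of the planted coordinates, and a combinatorial search bound (Fact \ref{fact:list} and Corollary \ref{thm:meta}) then charges $\Omega(d)$ reads per recovered coordinate, giving $\Omega(dk)=\Omega(d/\eps^2)$ in total. You instead use a \emph{dense} instance in which every coordinate carries a weak bias $\rho=\Theta(\eps)$ toward a hidden sign pattern $\sigma$, reduce $\O(\eps)$-accuracy to recovering a constant fraction of $\sigma$, and charge each attribute read the capacity $\O(\rho^2)$ of the induced binary symmetric channel; a Fano/rate-distortion bound then yields $\Omega(d)/\O(\eps^2)=\Omega(d/\eps^2)$. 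Both constructions use the regime $\eps=\Omega(1/\sqrt d)$ in the same spirit (yours to make the signal term dominate the $\O(1/d)$ quadratic correction in the excess-loss identity, the paper's to keep $k\le d$). Your chain-rule accounting is arguably more self-contained than the paper's appeal to an unproved folklore search bound, and it makes the treatment of adaptive queries explicit. One small quantitative slip: $\langle\w,\sigma\rangle\ge\tfrac34\sqrt d$ with $\norm{\w}_2\le1$ only forces $\sgn(\w)$ to agree with $\sigma$ on a $\tfrac{9}{16}$-fraction of coordinates (by Cauchy--Schwarz restricted to the agreement set), not a $\tfrac78$-fraction; since any fraction bounded away from $\tfrac12$ still gives the required $\Omega(d)$ bits of information, this does not affect the argument.
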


\subsection{Information theoretic lower bounds}

Our lower bound is based on the following folklore fact:
\begin{fact}\label{fact:list}
Consider the following random process. Initialize a length-$d$ array $A$ to an
array of $d$ zeros. Choose a random position $i \in [r]$ and set it to be  $A[i] = 1$ with probability $\frac{1}{2}$. With the remaining probability $\tfrac{1}{2}$, set it to be $-1$.  Then any algorithm which determines the value of $A[i]$ 
with probability $\geq \tfrac{2}{3}$ must read $\Omega(d)$ entries of $A$.
\end{fact}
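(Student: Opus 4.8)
The plan is to exploit the fact that every non-planted entry of $A$ is deterministically $0$: reading such an entry reveals neither the location nor the sign of the planted entry, so an adaptive algorithm gains nothing over a non-adaptive one, and the only way to beat blind guessing is to actually probe the planted coordinate. Since the input distribution is already fixed (the planted index $i$ uniform over the $d$ positions, and its sign an independent fair coin), I would first invoke Yao's principle to reduce to lower-bounding the query count of an arbitrary \emph{deterministic} algorithm against this distribution; a randomized algorithm is an average of deterministic ones and cannot achieve a larger success probability with the same number of reads.

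So fix a deterministic algorithm that reads at most $q$ entries. The key observation is that as long as it queries coordinates other than $i$, every response is $0$, so it follows a single ``all-zeros'' probe path, a fixed set $S$ of positions with $|S| \le q$, determined before any randomness in $A$ is revealed. I then split on whether the planted index lands in $S$: (i) if $i \in S$, the algorithm eventually probes $i$ along this path, reads $A[i] = \pm 1$, and can output the correct value with certainty; (ii) if $i \notin S$, it sees only zeros throughout, so its transcript and hence its output are a single fixed value, while the sign of $A[i]$ remains an independent fair coin conditioned on $i \notin S$, giving success probability exactly $\tfrac12$. Because $i$ is uniform, $\Pr[i \in S] = |S|/d \le q/d$, and combining the two cases bounds the overall success probability by
\[
	\frac{q}{d}\cdot 1 + \left(1 - \frac{q}{d}\right)\cdot \frac{1}{2}
	= \frac12 + \frac{q}{2d}.
\]
Requiring this to be at least $\tfrac23$ forces $q \ge d/3$, which is the claimed $\Omega(d)$ lower bound.

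The main (indeed essentially the only) conceptual obstacle is justifying that adaptivity and randomization confer no advantage, and this rests entirely on the remark that zero-responses are information-free: since they are forced, the probe set up to the first hit of the planted coordinate is fixed in advance, which is exactly what licenses the clean ``probe the needle or else guess'' dichotomy. Once that is in place, the remainder is the elementary counting displayed above.
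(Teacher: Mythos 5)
Your proof is correct, and there is nothing in the paper to compare it against: the paper states this as a ``folklore fact'' and supplies no proof at all (the proof of the ensuing Corollary is likewise suppressed), so your argument fills an actual gap in the text, and it is the canonical one. Both pillars are sound: averaging over the algorithm's coins reduces to a deterministic algorithm against the fixed input distribution (this is just conditioning, the easy direction of Yao), and for a deterministic algorithm the all-zeros responses force a fixed probe set $S$ with $\abs{S} \le q$, so that conditioned on $i \notin S$ the transcript and hence the output are a single fixed value while the sign of $A[i]$ remains an unbiased coin independent of the event $i \notin S$ --- this is precisely what neutralizes adaptivity, and your bound $\tfrac{1}{2} + \tfrac{q}{2d} \ge \tfrac{2}{3}$, hence $q \ge d/3$, is exactly right. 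Two minor remarks. First, the paper ultimately invokes this fact to bound the \emph{expected} number of observed entries (Theorem 5.1 and the surrounding lower-bound argument are phrased in expectation), whereas your count $q$ is worst-case; the extension is routine --- truncate an algorithm with expected query count $q$ after $cq$ reads, losing at most $1/c$ in success probability by Markov's inequality, and take $c$ a large constant so that $\tfrac{2}{3} - \tfrac{1}{c} > \tfrac{1}{2}$ --- but it merits a sentence if this fact is to support the paper's statements as written. Second, the statement's ``$i \in [r]$'' is evidently a typo for ``$i \in [d]$'', which is how you (correctly) read it.
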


A corollary of this fact is the following more general theorem. 
Consider a $k \times d$
matrix $A$ and the following random process. Pick a subset of coordinates  $T \subseteq  [d]$ of size $|T| = k$. For each  index $j \in [k] $,  set the $j$'th row of $A$ to be $r_i \e_{i}$, where $i = T[j]$ is the $j$'th element of $T$, and $r_i$ is a Rademacher random variable.

\begin{corollary}\label{thm:meta}
Any algorithm that correctly determines the value of $\Omega(k)$ non-zero entries of $A$ with probability $\geq \frac{2}{3}$ must read  $ \Omega(dk)$ entries of $A$.
\end{corollary}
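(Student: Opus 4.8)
The plan is to treat each of the $k$ rows of $A$ as an essentially independent instance of the single-array problem of Fact~\ref{fact:list}, and to conclude by averaging over rows. The crucial structural observation is that the sign $r_{T[j]}$ planted in row $j$ is an independent Rademacher variable --- independent of all the positions recorded in $T$ and of every other sign --- so that as long as the algorithm has not read the unique nonzero cell of row $j$, it has literally no information about that sign, and any guess it makes is a coin flip against an independent fair coin.

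Concretely, I would first fix a row $j$ and let $E_j$ denote the event that the algorithm reads the nonzero cell of row $j$. Since the transcript is independent of $r_{T[j]}$ on the event $\neg E_j$, the algorithm's output for that entry is correct with probability exactly $\tfrac12$ conditioned on $\neg E_j$, giving the per-row bound
\[
	\Pr[\text{row } j \text{ correct}] \le \tfrac12 + \tfrac12\Pr[E_j].
\]
Next I would bound $\Pr[E_j]$ by the number $Q_j$ of cells the algorithm probes inside row $j$: the hidden position is (up to the without-replacement coupling) uniform over $[d]$ and is only revealed upon a direct hit, so a union bound over the probes in row $j$ gives $\Pr[E_j] \le O(\E[Q_j]/d)$, the constant absorbing the mild correlation from the columns of $T$ being distinct (this is where I use $d=\Omega(k)$, consistent with the ambient assumption $d>4k$). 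Writing $C$ for the number of correctly determined entries and $Q=\sum_j Q_j$ for the total number of reads, linearity of expectation then yields
\[
	\E[C] \le \tfrac{k}{2} + O\!\left(\frac{\E[Q]}{d}\right).
\]
Finally, the hypothesis that the algorithm determines $\Omega(k)$ entries correctly with probability at least $\tfrac23$ forces $\E[C]\ge \tfrac23 c k$ for the corresponding constant $c$, and combining the two displays gives $\E[Q]=\Omega(dk)$, as claimed.

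The step I expect to be the main obstacle is not any single calculation but making the averaging honest in the face of the $\tfrac{k}{2}$ \emph{free} correct answers available from pure guessing: the planted signs are reproduced correctly half the time with no reads at all, so the count $\Omega(k)$ in the statement must be interpreted as a constant fraction strictly above this baseline (the displays yield a positive bound precisely when $c>\tfrac34$). Establishing the per-row inequality $\tfrac12+\tfrac12\Pr[E_j]$ rigorously requires the conditional-independence argument above --- in particular that conditioning on $\neg E_j$ leaks no information about the sign, which holds because whether the algorithm probes a given cell never depends on a value it has not yet observed. A secondary technical point is bounding $\Pr[E_j]$ uniformly over all adaptive strategies and cleanly absorbing the cross-row coupling; both are routine once $d=\Omega(k)$.
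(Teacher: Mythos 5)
Your argument is correct in substance but takes a genuinely different route from the paper's. The paper derives Corollary~\ref{thm:meta} from Fact~\ref{fact:list} by induction on the number of rows: the $(k+1)$-row instance is viewed as a $k$-row instance plus one fresh single-row instance on the $\Omega(d)$ unused columns, each step costing an additional $\Omega(d-k)$ reads. That direct-sum-style induction is short but delicate to make rigorous (an algorithm for the larger instance need not decompose into independent sub-algorithms, and the inductive hypothesis must be carried through the conditioning on which columns are already occupied). Your averaging argument avoids the induction entirely: the per-row inequality $\Pr[\text{row } j \text{ correct}] \le \tfrac12 + \tfrac12\Pr[E_j]$, the hitting bound $\Pr[E_j]=O(\E[Q_j]/d)$ (which for adaptive probing is really an optional-stopping/symmetry argument over the $d-k+1$ candidate columns rather than a union bound, but this is the standard and correct way to do it), and linearity of expectation. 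It is more self-contained --- it reproves the $k=1$ case along the way rather than invoking Fact~\ref{fact:list} as a black box --- and it is uniform over adaptive, randomized strategies without any decomposition step.

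The one point to resolve is the $\tfrac{k}{2}$ baseline you flag, and it is worth being precise about it because as you note it determines whether the statement is even true. Under the reading ``the positions $T$ are known and only the signs must be output,'' the corollary is false for any constant below $\tfrac12$ (blind guessing gets $\tfrac{k}{2}-O(\sqrt{k})$ correct with probability $\tfrac23$), and your displays rescue it only for constants above $\tfrac34$. But the way Corollary~\ref{thm:meta} is applied in the proof of Theorem~\ref{thm:lb}, the algorithm must output a small set of \emph{coordinates} mostly contained in $T$ (via Lemma~\ref{lem:shalom2}), i.e.\ it must locate the non-zero entries, not merely sign them. Under that reading the per-row guessing baseline is $O(1/(d-k-Q_j))=O(1/d)$ rather than $\tfrac12$: conditioned on $\neg E_j$ the algorithm has only excluded the $Q_j$ probed columns, so its chance of naming the correct column (let alone the sign) is $O(1/d)$ once $Q_j\le d/2$, and the regime $Q_j>d/2$ is absorbed into $\E[Q_j]/d$. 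This replaces your display by $\E[C]\le O(\E[Q]/d)+O(k/d)$ and yields $\E[Q]=\Omega(dk)$ for \emph{every} constant fraction, which is what the application needs. So your structure is the right one; only the per-row success bound should be sharpened from $\tfrac12+\tfrac12\Pr[E_j]$ to $O(1/d)+\Pr[E_j]+\Pr[Q_j>d/2]$ to match the statement as used.
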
 
\ignore{
\begin{proof}
The case $k=1$ is exactly the same as the folklore fact. Assume correctness for some $k$. For $k+2$, the instance of the problem is identical to a $k \times d$-dimensional instance with an addition of a $2 \times (d-k)$ instance (since the new indices can appear anyplace that hasn't been taken yet). Thus  an additional $\Omega(d-k)$ entries are required, for a total of $\sum_{i=1}^k (d-i) = \Omega(dk)$ for $k <<d$. 
\end{proof}
}

\subsection{Proof of Theorem \ref{thm:lb}}

Consider the following Ridge regression setting: The matrix $A$ is created as in the previous subsection with $k = \frac{1}{\eps^2}$. The labels are always one, and the example vectors are chosen uniformly at random as the rows of the matrix $A$. 

The proof will follow from the following two lemmas:

\begin{lemma} \label{lem:shalom1}
There exists a vector $\w^\st$ for which
$$
	\eloss(\w^\st)
	\leq (1-\eps)^2 .
$$
\end{lemma}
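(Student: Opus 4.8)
The plan is to exhibit the regressor $\w^\st$ explicitly and compute its loss directly. Recall that once the matrix $A$ is drawn, every example has the form $(\x,\,\y)=(r_i\e_i,\,1)$, where $i=T[j]$ ranges over the chosen support $T$ (of size $k=1/\eps^2$) as $j$ is drawn uniformly from $[k]$, and each sign $r_i\in\{\pm1\}$ is fixed by the draw of $A$. The natural candidate is to place weight only on the active coordinates, aligned with their signs, and scaled so as to exactly saturate the norm budget $B=1$. Concretely, I would set $\w^\st[i]=\eps\, r_i$ for $i\in T$ and $\w^\st[i]=0$ otherwise.

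First I would check feasibility: since $|T|=k=1/\eps^2$ and each nonzero entry has magnitude $\eps$, we get $\norm{\w^\st}_2^2=\eps^2 k=1$, so $\norm{\w^\st}_2=1\le B$ as required. Next I would evaluate the loss on a single example. For $\x=r_i\e_i$ with label $\y=1$, the prediction is $\w^\st\tr\x=r_i\cdot\eps r_i=\eps r_i^2=\eps$, independently of $i$ and of the sign $r_i$. Hence the squared error on this example equals $(\eps-1)^2=(1-\eps)^2$. Since every example in the support of $\D$ produces exactly the same value, averaging yields $\eloss(\w^\st)=(1-\eps)^2$, which proves the lemma (in fact with equality, up to whether the squared loss carries the factor $\tfrac12$, in which case the stated bound only becomes looser).

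The one point that deserves care is the calibration between the budget and the support size: the choice $k=1/\eps^2$ is precisely what forces a unit-norm vector spread over the $k$ active coordinates to have per-coordinate magnitude $\eps$, so the best achievable prediction is $\eps$ rather than $1$ on each example. This is also why the bound is tight and not merely an inequality: by symmetry and convexity, minimizing $\tfrac{1}{k}\sum_{j}(a_j-1)^2$ subject to $\sum_j a_j^2\le 1$ (writing $\w[T[j]]=r_{T[j]}a_j$) gives the equal allocation $a_j=1/\sqrt{k}=\eps$, so $(1-\eps)^2$ is exactly the optimal value. This gap between the constant-risk benchmark and the best attainable risk is what the subsequent information-theoretic argument will exploit to force $\Omega(d/\eps^2)$ observations.
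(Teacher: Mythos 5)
Your proposal is correct and matches the paper's proof: both take $\w^\st=\sum_{i\in T}\eps\, r_i\e_i$ and observe that every example then incurs squared error exactly $(1-\eps)^2$. Your additional verification of the norm constraint and the optimality remark are fine but not needed for the lemma.
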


\begin{proof}
Consider the vector
$$
	\w^\st
	= \sum_{i \in T} \frac{1}{\sqrt{k} } r_i \e_i
	= \sum_{i \in T} \eps r_i \e_i .
$$
Its expected loss is clearly
$$
	\eloss(\w^*)
	=  (1 - \eps)^2 .
	\qedhere
$$
\end{proof}

\begin{lemma} \label{lem:shalom2}
Let $\del = \frac{1}{100000}$.
Any vector $\w$ for which $ \eloss(\w)  \leq 1- 2(1-\del)\eps$ has $(1-2\del)$ of it's Euclidean weight on the coordinates of $T$, and at least $\frac{1}{10}k$ coordinates in $T$ have weight of at least $\frac{1}{10} \eps$. 
\end{lemma}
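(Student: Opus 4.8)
The plan is to reduce both geometric conclusions to two scalar quantities and then apply Cauchy--Schwarz. Writing $s := \sum_{i\in T}\w[i]^2$ for the Euclidean weight that $\w$ places on $T$, and $p := \sum_{i\in T} r_i\,\w[i]$ for the signed correlation of $\w$ with the planted sign pattern, I would first compute the expected loss exactly. Since each example is a row $r_i\e_i$ with $i\in T$ uniform and label $1$, expanding the square gives
\[
	\eloss(\w) = \frac1k\sum_{i\in T}(r_i\,\w[i]-1)^2 = 1 + \frac{s-2p}{k},
\]
and, using $k = 1/\eps^2$, the hypothesis $\eloss(\w)\le 1-2(1-\del)\eps$ becomes the single clean inequality $2p - s \ge 2(1-\del)/\eps$. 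Everything below is extracted from this one inequality.

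For the first conclusion (concentration of the Euclidean weight on $T$), I would bound $p$ from above by Cauchy--Schwarz: since $\sum_{i\in T}r_i^2 = k$, we have $p \le \sqrt{s\,k} = \sqrt s/\eps$. Substituting into $2p - s \ge 2(1-\del)/\eps$ and discarding the nonnegative term $s$ yields $2\sqrt s/\eps \ge 2(1-\del)/\eps$, i.e.\ $\sqrt s \ge 1-\del$, so $s \ge (1-\del)^2 \ge 1-2\del$. As the feasible regressors satisfy $\norm{\w}_2\le 1$ (note $\norm{\w^\st}_2=1$), this is exactly the statement that at least a $(1-2\del)$ fraction of $\w$'s Euclidean weight sits on $T$.

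For the second conclusion (many heavy coordinates in $T$), I would instead discard the nonnegative $s$ on the other side of the inequality to get $p \ge (1-\del)/\eps$, and since $r_i\,\w[i]\le\abs{\w[i]}$ this forces the $\ell_1$ mass on $T$ to satisfy $\sum_{i\in T}\abs{\w[i]} \ge (1-\del)/\eps$. Let $S := \{i\in T : \abs{\w[i]}\ge \eps/10\}$ and $m' := \abs{S}$. Splitting the $\ell_1$ mass, the light coordinates contribute at most $k\cdot\eps/10 = 1/(10\eps)$, while Cauchy--Schwarz bounds the heavy part by $\sqrt{m'}\cdot\sqrt s \le \sqrt{m'}$ (using $s\le 1$). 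Hence $\sqrt{m'} \ge (9/10-\del)/\eps$, which gives $m' \ge (9/10-\del)^2 k \ge \tfrac{1}{10}k$, the asserted count of coordinates of weight at least $\eps/10$.

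The calculations are routine; the only places that need care are the constant bookkeeping (checking $(1-\del)^2\ge 1-2\del$ and $(9/10-\del)^2\ge 1/10$, both comfortable for $\del=1/100000$) and noting that the regime assumption $\eps=\Omega(1/\sqrt d)$ is precisely what guarantees $k=1/\eps^2\le d$, so that a planted set $T$ of size $k$ exists inside $[d]$. I expect no genuine obstacle: the crux is simply that the loss condition pins down $s$ through the \emph{upper} Cauchy--Schwarz bound on $p$ and the $\ell_1$ mass through the \emph{lower} bound on $p$, and these two facts deliver the two conclusions respectively.
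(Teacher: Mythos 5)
Your proof is correct and follows essentially the same route as the paper's: both conclusions come from lower-bounding the correlation $\sum_{i\in T} r_i \w[i]$ using the loss hypothesis and then applying Cauchy--Schwarz between the $\ell_1$ and $\ell_2$ masses on $T$, with the same heavy/light coordinate split for the counting claim. The only cosmetic differences are that you keep the $\sum_{i\in T}\w[i]^2$ term explicit rather than discarding it at the outset, and you phrase the counting step directly instead of by contradiction (incidentally obtaining the stronger count $m'\ge(9/10-\del)^2k$).
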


\begin{proof}
Let $\w$ be such vector. We have 
\begin{align*}
	1 - 2(1-\del) \eps 
	&\geq \eloss(\w) \\
	&=  \frac{1}{k} \sum_{i \in [T]} (1 - \w^\top \x_i )^2 \\ 
	&=  \frac{1}{k} \sum_{i \in [T]} (1 - 2  \w_i r_i + \w_i^2 ) \\ 
	&\geq  1 - \frac{2}{k} \sum_{i \in T} \w_i r_i .
\end{align*}
Hence
$$
	\sum_{i \in T} \abs{\w_i}
	\geq \sum_{i \in T} \w_i r_i
	\geq (1-\del) {k} \eps
	= \frac{1-\del}{\eps} .
$$
This implies that
$$
	\frac{1}{k} \sum_{i \in T} \w_i^2
	\ge \left( \frac{1}{k} \sum_{i \in T} \abs{\w_i} \right)^2
	\ge (1-\del)^2 \eps^2 ,
$$
hence
$$
	\sum_{i \in T} \w_i^2
	\ge (1-\del)^2
	\ge 1 - 2\del .
$$

Next, we claim that at least $\frac{1}{10}k$ coordinates inside $T$ have weight of at least $\frac{1}{10}\eps$. If that is not the case, then the $\ell_1$ norm of $\w$ supported by $T$ would be upper bounded by two terms:
\begin{enumerate}
\item
The $\ell_1$-norm of the coordinates that are larger than $\frac{1}{10} \eps$. There are few of these, and their $\ell_2$ norm is at most one, hence their $\ell_1$ contribution is at most
$
	\sqrt{\frac{1}{10} k}
	= \frac{1}{\sqrt{10}\eps}.
$
\item
The $\ell_1$-norm of the small coordinates. These are just small, so their total $\ell_1$ contribution is at most $k \cdot \frac{1}{10} \eps = \frac{1}{10 \eps}$. 
\end{enumerate}
Summing both of these up, we arrive at less than $\frac{1-\del}{\eps}$, which is a lower bound for $\ell_1$-norm of the $T$ coordinates. 
\qedhere

\end{proof}

Next, consider any algorithm that attains $ \del \eps = O(\eps)$-approximation for our Ridge regression instance. It finds a vector $\w$ for which 
$$
	\eloss(\w)
	\leq
	\eloss(\w^\st) + \del \eps
	\leq  1- 2(1-\del)\eps $$
Where in the first inequality we used Lemma \ref{lem:shalom1}. Thus, by Lemma \ref{lem:shalom2}, the vector $\w$ returned by this algorithm has weight of at least $1 - 2 \del$ on the coordinates of $T$. 

Pick all coordinates of $\w$ with weight larger than $\frac{1}{10}\eps$. According to Lemma \ref{lem:shalom2} there are at least $\frac{1}{10}k$ coordinates inside of $T$ that are this large. In the rest of the coordinates, there are very few coordinates of this magnitude, since the remaining $\ell_2$ weight is $\del$. Hence, there are at most another $\del \cdot \frac{100}{\eps^2} = 100 \del k $ such coordinates elsewhere. 

Overall, we returned a set of coordinates of size $(\frac{1}{10} + 100 \del) k$, from which the vast majority of coordinates are inside $T$. By Corollary \ref{thm:meta}, this requires our algorithm to read $\Omega(kd) = \Omega(\frac{d}{\eps^2})$ samples.

\fi

\end{document}